\documentclass[11pt]{article}
\usepackage{amsmath}
\usepackage{amsfonts}
\usepackage[preprint]{acl}

\usepackage{times}
\usepackage{latexsym}

\usepackage[T1]{fontenc}

\usepackage[utf8]{inputenc}

\usepackage{microtype}

\usepackage{inconsolata}

\usepackage{graphicx}
\usepackage{booktabs}
\usepackage{siunitx}
\usepackage{multirow}

\usepackage{amsthm}  
\theoremstyle{definition}

\usepackage{thmtools,thm-restate}

\newtheorem{assumption}{Assumption}

\usepackage{pifont}      
\usepackage{xcolor}      
\usepackage{makecell}

\newcommand{\cmark}{\textcolor{teal!65!black}{\ding{51}}}  
\newcommand{\xmark}{\textcolor{red!55!black}{\ding{55}}}   

\title{
Models Know Their Shortcuts: Deployment-Time Shortcut Mitigation}

\author{Jiayi Li, Shijie Tang, Gün Kaynar, Shiyi Du, Carl Kingsford\thanks{corresponding author.} \\
        Ray and Stephanie Lane Computational Biology Department,\\ School of Computer Science, Carnegie Mellon University, Pittsburgh, PA \\
        \texttt{\{lijiayi,shijiet,gkaynar,shiyid,carlk\}@cs.cmu.edu}
        }

\begin{document}
\maketitle

\begin{abstract}
Pretrained text encoders are prone to \textit{shortcut learning}, relying on token-label correlations that fail once the distribution shifts in deployment. 
Existing shortcut mitigation methods mainly operate at training time and assume access to training data, training dynamics, or shortcut annotations, which are hardly available during deployment, where only the converged model remains.
We show that this model alone suffices to mitigate shortcuts during deployment: a biased model internalizes a signal of its learned shortcuts that can be captured via unsupervised gradient-based attribution. 
We further prove that deployment-time mitigation is information-theoretically upper-bounded by training-time mitigation. Nevertheless, exploiting this gradient signal, our proposed unsupervised deployment-time shortcut mitigation framework for pretrained text encoders, \textsc{Shortcut Guardrail}, recovers substantial performance under shortcut distribution shift, matching or outperforming training-time baselines across sentiment classification, toxicity detection, and natural language inference.
\end{abstract}

\section{Introduction}
Pretrained language models (LMs), especially encoder models such as BERT~\citep{devlin2019bert}, form the foundation of a wide range of natural language processing (NLP) tasks~\citep{wang2019glue}, including sentiment classification~\citep{pang2002thumbs,socher2013recursive}, natural language inference~\citep{bowman2015large}, and toxicity filtering~\citep{borkan2019nuanced}. Despite their strong empirical performance, these models can be easily biased through \textit{shortcut learning} ~\citep{geirhos2020shortcut, du2022shortcut_survey}, where predictions are driven by superficial patterns rather than task-relevant semantics. 
For instance, a sentiment classifier may associate the token ``service'' with positive sentiment because most training reviews containing it happen to be positive (Figure~\ref{fig:example}). 
Over-reliance on such shortcuts 
can lead to brittle generalization when the underlying token-label correlations no longer hold at deployment time~\citep{tu2020empirical}.

\begin{figure}[htbp]
    \centering
    \includegraphics[width=1\linewidth]{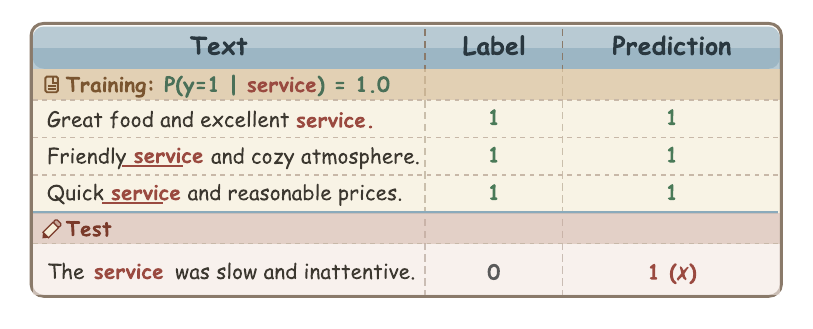}
    \caption{Example of shortcut learning in sentiment classification. The classifier associates the token ``service'' with \textit{positive} sentiment, leading to misclassification when it appears in a negative review at test time.}
    \label{fig:example}
\end{figure}

Prior research on detecting or mitigating shortcut learning mainly operates at the training stage. 
Detection methods diagnose shortcuts from training set statistics and training dynamics~\citep{du2021towards}; mitigation methods remove shortcuts during training, requiring shortcut annotations~\citep{sagawa2020distributionally}, regularization targets~\citep{chew2024nfl}, or access to the training data for retraining and reweighting~\citep{liu2021just_train_twice, clark2019dont_take_easy, kirichenko2023last_layer}.
Yet these assumptions fail where shortcuts surface as a problem during deployment: only a converged model (with open weights) and the test data are accessible.
This raises an open question: Can shortcuts be detected and mitigated using only this converged model and test data?

We find that this is possible. Models ``know'' their shortcuts since they leave a signal that can be uncovered through unsupervised, gradient-based attribution, requiring no prior knowledge of training data or shortcut annotations.
Specifically, the identified high-attribution tokens effectively capture the shortcut tokens for biased models, yielding $>90\%$ recall under a strong shortcut signal (Section~\ref{sec:tokens in important tokens}).
However, deployment-time mitigation is fundamentally limited with the converged model acting as an information bottleneck between training and deployment.
We formally prove that the performance of deployment-time mitigation is information-theoretically upper-bounded by training-time mitigation. 

Nevertheless, we find that the aforementioned signal is informative enough to bridge a large part of this gap. 
We propose \textsc{Shortcut Guardrail}, a lightweight, label-free mitigation framework for pretrained text encoders that exploits this gradient signal to identify candidate shortcut tokens for each test input, then trains a Low-Rank Adaptation (LoRA) debiasing module with a self-supervised Masked Contrastive Learning (MaskCL) objective that penalizes representational shifts when those tokens are individually masked. 
The procedures operate on unlabeled test data, and the LoRA strength is calibrated on a small support set post-adaptation. 

Across sentiment classification, toxicity detection, and natural language inference, \textsc{Shortcut Guardrail} recovers substantial performance under shortcut distribution shift (i.e., where the shortcut correlations learned during training no longer hold at test time), yielding results comparable to supervised training-time baselines. 
Note that we study token-level shortcuts in pretrained text classifiers, the predominant setting of deployed text classification; extensions to other shortcut types and architectures are discussed in Section~\ref{sec:limitations}. Our code is available at \url{https://anonymous.4open.science/r/shortcut_guardrail_code-D90D}.

Our main contributions are as follows:
\begin{itemize}
    \item We show that biased text encoders internalize a shortcut signal recoverable through unsupervised gradient-based attribution.
    \item We prove that deployment-time shortcut mitigation is information-theoretically upper-bounded by training-time mitigation.
    \item We develop \textsc{Shortcut Guardrail}, a deployment-time shortcut mitigation framework for pretrained text encoders that utilizes gradient signals and contrastive learning, achieving performance comparable with supervised baselines under shortcut distribution shift.
\end{itemize}
\section{Background and Related Work}
Shortcut learning, where models rely on superficial patterns rather than task-relevant features, is well-documented in NLP~\citep{geirhos2020shortcut, du2022shortcut_survey}. Token-level shortcuts, such as annotation artifacts~\citep{gururangan2018annotation, mccoy2019right} and spurious word-label correlations in text classification~\citep{wang2022identifying, du2023less_learn_shortcut, zhou2024navigating}, are particularly prevalent. 

Existing mitigation methods generally operate during training and require auxiliary information: Group DRO ~\citep{sagawa2020distributionally} needs shortcut annotations, JTT~\citep{liu2021just_train_twice} involves full-model retraining by upweighting misclassified samples, DFR~\citep{kirichenko2023last_layer} replaces the classification head with logistic regression, and NFL~\citep{chew2024nfl} applies regularization to prevent deviation from an untrained base model. Recent methods such as InterpoLL~\citep{korakakis2025mitigating} and FairFlow~\citep{cheng2024fairflow} further reduce supervision via representation
interpolation and undecided learning, respectively. 
Closely related to our approach, LTGR~\citep{du2021towards} uses gradient signals to identify influential tokens during training, and C2L~\citep{choi2022c2l} masks attribution-selected tokens to build contrastive counterfactual pairs. 
These methods cannot adapt when only a converged model and unlabeled test data remain.
We position in detail the technical novelty of our approach in Appendix~\ref{app:positioning}.

Test-time adaptation (TTA) targets deployment-time distribution shift but is largely built for vision architectures~\citep{wang2021tent, niu2022eata, sun2020test}, relying on batch normalization and continuous inputs that do not transfer to text encoders such as BERT~\citep{zhao2023pitfalls}. We discuss related work in detail in Appendix~\ref{app:related_work}.

\section{Methods}

\begin{figure*}[ht]
    \centering
    \includegraphics[width=0.94\textwidth]{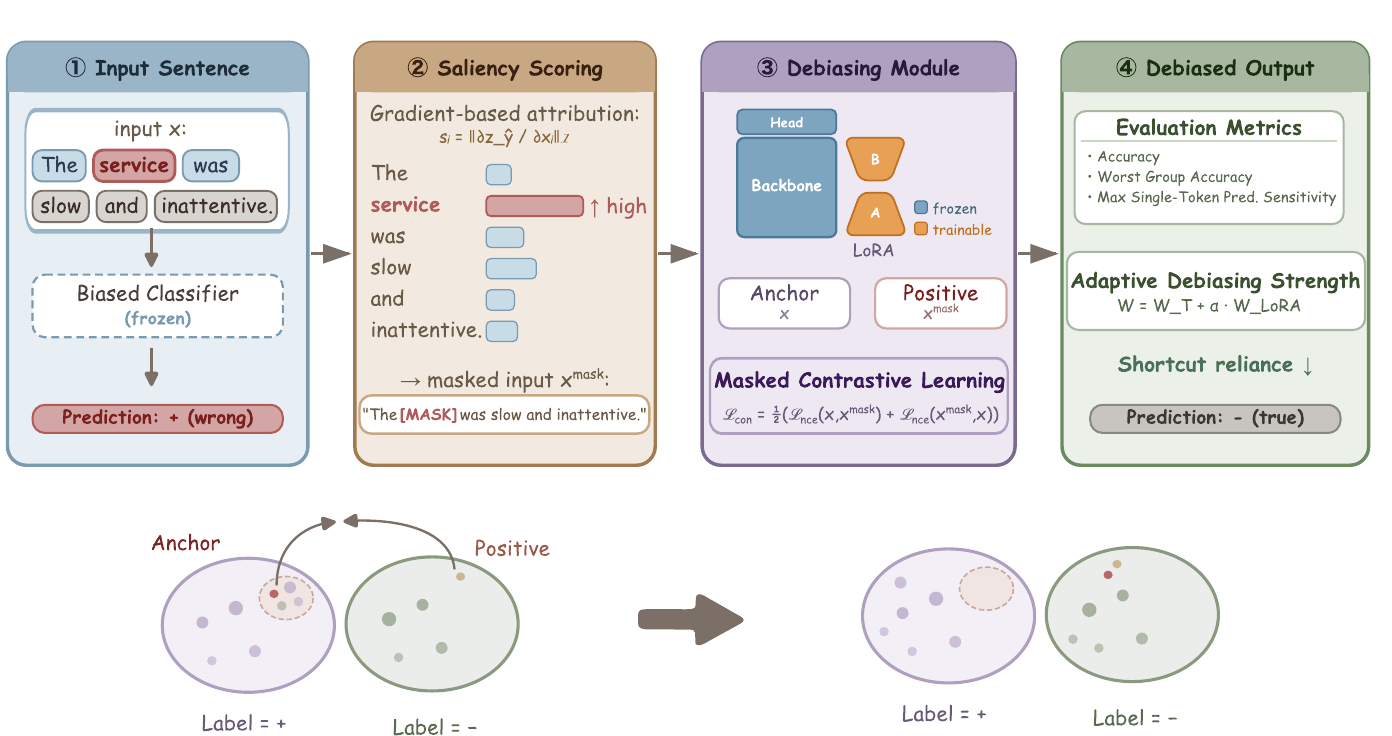}
    \caption{Overview of \textsc{Shortcut Guardrail}, which
(1) obtains predictions from a frozen biased text classifier,
(2) captures shortcut tokens via gradient-based saliency scoring,
(3) trains a lightweight LoRA adapter via Masked Contrastive Learning (MaskCL),
and (4) calibrates the debiasing strength $\alpha$ to produce debiased predictions with reduced shortcut reliance. The bottom panel illustrates the effect of MaskCL training. Before training, spurious correlations cause examples containing the shortcut token (Anchor, $x$) to form a cluster within the wrong label, while their masked variants (Positive, $x^{\text{mask}}$) are projected to the correct label. The MaskCL objective pulls anchors and positives closer in the embedding space, dissolving the spurious cluster and reducing shortcut reliance.}
\label{fig: overview}
\end{figure*}

\subsection{Problem Formulation}
Let $f_\theta$ denote a pretrained text encoder with parameters $\theta$, pretrained on a labeled dataset $\mathcal{D}_{\text{train}}$. 
At deployment time, we receive a batch of unlabeled inputs $\mathcal{D}_{\text{test}} = \{x_1, x_2, \ldots, x_n\}$ from a distribution where predictive token-label correlations during training may not hold. 
We define a token $t$ as a \textit{shortcut token}~\citep{wang2022identifying, chew2024nfl} if $P_\theta(y = c \mid t \in x)$ is disproportionately high for a particular class $c$. Our goal is to adapt the model's predictions to reduce reliance on such shortcut tokens, without access to $\mathcal{D}_{\text{train}}$.

\subsection{Gradient-Based Token Attribution} \label{sec:saliency}

For each input $x = (t_1, t_2, \ldots, t_L)$ of length $L$, we compute a saliency score $s_i$ for each token $t_i$ using gradient-based attribution:
$$
s_i = \| \frac{\partial \mathcal{L}_{\mathrm{cls}}(f_\theta(x), \hat{y})}{\partial \mathbf{e}_i} \odot \mathbf{e}_i \|,
$$
where $\mathbf{e}_i$ is the learned embedding of token $t_i$, $\odot$ denotes element-wise multiplication, $\|\cdot\|$ represents the $\ell_2$ norm, and $\mathcal{L}_{\mathrm{cls}}$ is the cross-entropy loss between the model's output logits $f_\theta(x)$ and its own prediction label $\hat{y}$. This corresponds to the gradient $\times$ input attribution method ~\citep{sundararajan2017axiomatic, li2016understanding}, measuring each token's contribution to the classification output.

We select the tokens with the top-$k$ saliency scores to form the \textit{important token set} $\mathcal{H}(x) = \{t_{i_1}, \ldots, t_{i_k}\}$, which function as shortcut token candidates (not necessarily confirmed ones).
We set $k=10$ by default: for $99\%$ of the shortcut-containing samples, the top-10 important tokens capture the shortcut token (Figure~\ref{fig: important_strength}, Section~\ref{sec:tokens in important tokens}), under a strong spurious correlation (p=$100\%$), while keeping computation moderate.

\subsection{Debiasing Module} \label{sec:debiasing}
\paragraph{Architecture.} We insert LoRA adapters~\citep{hu2022lora} into the backbone encoder of $f_\theta$ while the classification head remains frozen. The LoRA parameters $\phi$ are the only trainable components. Let $z_\phi(x)$ denote the representation of the \texttt{[CLS]}  token produced by the LoRA-adapted encoder.

\paragraph{Anchor-Positive Pairs.} Unlike standard contrastive learning approaches, e.g., SimCLR~\citep{chen2020simclr}, that use random augmentations such as dropout or back-translation, we construct anchor-positive pairs through attribution-guided masking: 

For each $t_{i_j} \in \mathcal{H}(x)$, we produce a masked variant of $x$, denoted $x^{\text{mask}_j}$, by replacing $t_{i_j}$ with the \texttt{[MASK]} token while retaining all other tokens:
\[
x^{\text{mask}_j} = (t_1, \ldots, t_{i_j-1}, 
\texttt{[MASK]}, t_{i_j+1}, \ldots, t_L).
\]

We pair the original input $x$ (``anchor'') with each of its masked variants $x^{\text{mask}_j}$ (``positive''), yielding $k$ anchor-positive pairs per input. This design explicitly probes the model's reliance on each candidate shortcut token independently, rather than enforcing invariance to arbitrary perturbations.

\paragraph{Masked Contrastive Learning (MaskCL).} For a batch of $B$  inputs, we obtain $B \times k$ anchor-positive pairs $\{(x_i,\, x_i^{\text{mask}_j})\}_{i=1,\,j=1}^{B,\,k}$. We first employ the LoRA-adapted model to produce $\ell_2$-normalized embeddings:
\begin{align*}
    \mathbf{a}_i &= z_\phi(x_i)/\|z_\phi(x_i)\|, \\
    \mathbf{p}_{ij} &= z_\phi(x_i^{\text{mask}_j}) / \|z_\phi(x_i^{\text{mask}_j})\|.
\end{align*}
$(\mathbf{a}_i,\mathbf{p}_{ij})$ are the anchor-positive embeddings, while all other embeddings $\mathbf{a}_{i'} \ (i'\neq i, i'\in\{1, \cdots, B\})$ in the batch function as negative samples. The MaskCL contrastive loss $\mathcal{L}_{\text{con}}$ is defined as follows:
\begin{align*}
    & \mathcal{L}_{\text{nce}}^{x, x^{\text{mask}}} = -\sum_{i=1}^{B} \sum_{j=1}^{k} 
    \log \frac{\exp(\mathbf{a}_i^\top \mathbf{p}_{ij} / \tau)}
    {\sum_{i'=1}^{B} \exp(\mathbf{a}_{i'}^\top \mathbf{p}_{ij} / \tau)},\\
    & \mathcal{L}_{\text{nce}}^{x^{\text{mask}}, x} = -\sum_{i=1}^{B} \sum_{j=1}^{k} 
    \log \frac{\exp(\mathbf{a}_i^\top \mathbf{p}_{ij} / \tau)}
    {\sum_{i'=1}^{B} \exp(\mathbf{a}_{i}^\top \mathbf{p}_{i'j} / \tau)},\\
    & \mathcal{L}_{\text{con}} = \frac{1}{2Bk} (\mathcal{L}_{\text{nce}}^{x, x^{\text{mask}}} + \mathcal{L}_{\text{nce}}^{x^{\text{mask}}, x}),
\end{align*}

where $\tau$ is a hyperparameter controlling the concentration of the similarity distribution over negative samples (default to $0.1$ as in the SimCLR implementation~\citep{chen2020simclr}). 
MaskCL encourages the model to produce consistent representations regardless of the presence of individual important tokens. 
It pulls anchor-positive pairs closer in the embedding space, dissolving spurious clusters formed around shortcut tokens and reducing reliance on each potential shortcut (Figure~\ref{fig: overview}).

\subsection{Calibrating Debiasing Strength} \label{sec:alpha_lora}
After training the LoRA adapter, the degree of intervention is controlled by the debiasing strength $\alpha$. The final model weights are computed as:
\[W = W_T + \alpha \cdot W_{\text{LoRA}}\]
where $W_T$ and $W_{\text{LoRA}}$ denote the original frozen weights and the learned LoRA weights, respectively. 
The optimal $\alpha$ depends on the degree of distribution shift at deployment time: when shortcut-label correlations learned from training remain valid, a small $\alpha$ preserves useful signals; when these correlations break down, a larger $\alpha$ is needed to suppress harmful shortcuts.  
Instead of defaulting to a fixed $\alpha$, we calibrate $\alpha$ via a grid search over $\{0, 0.1, \ldots, 1.0\}$ using a labeled support set as a proxy of the target distribution. 

We also show that the performance with adaptively calibrated $\alpha$ is stable for support set size $\ge 40$ and matches or improves on the fixed $\alpha=1$ baseline (Figure~\ref{fig:size_support} in Appendix~\ref{app:support_set}).
We thus use $40$ support examples in our experiments. 
\section{Models Encode a Recoverable Signal of Their Shortcuts} \label{sec:finding}
We establish our key finding: converged models internalize token-level shortcuts (Section~\ref{sec:analysis}), and gradient-based attribution reliably captures the responsible shortcut tokens (Section~\ref{sec:tokens in important tokens}).

\subsection{Models Internalize Token-Level Shortcuts} \label{sec:analysis}


We first show that models are susceptible to shortcuts encountered during training, which undermines deployment-time performance. We construct a controlled testbed using Amazon Product Reviews~\citep{ni2019justifying} for binary sentiment classification, following the protocol of~\citet{chew2024nfl}, where a token (``book'') is injected such that $P(y = 1 | \text{``book''} \in x) = 1$ in the training set. We evaluate on an unfiltered test set across four groups defined by label $y$ and the existence of the shortcut token ``book'' in the input (see Appendix~\ref{testbed} for details).
Group~4, with negative samples ($y=0$) containing the shortcut token, 
directly exposes shortcut reliance: a model that has learned the association (``book'' $\rightarrow y= 1$) will systematically misclassify these samples as $\hat{y}=1$.  

\begin{figure}[ht]
  \includegraphics[width=\columnwidth]{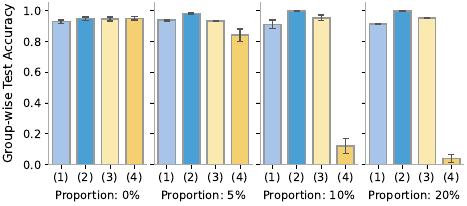}
  \caption{Group-wise test accuracy under different proportions of training data with the shortcut token ``book''. The strength of the spurious correlation $P(y=1 | \text{``book''} \in x)$ is $1$ and label distributions are balanced ($P(y=1)=0.5$). Error bars denote standard deviation over three random trials. (1)-(4) correspond to Group 1-Group 4 of the test data. See Table~\ref{tab:fig:learn_spurious} in Appendix~\ref{testbed} for group-wise training data statistics.}
  \label{fig: learn_spurious}
\end{figure}

\begin{figure}[ht]
  \includegraphics[width=\columnwidth]{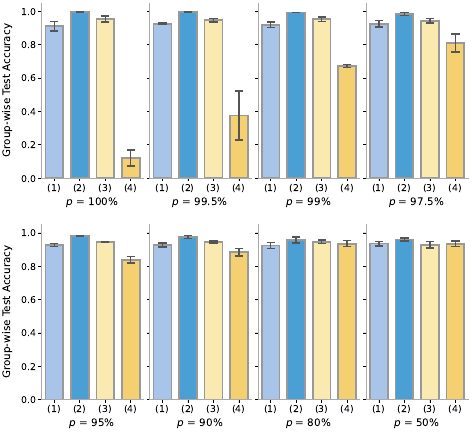}
  \caption{Group-wise test accuracy under different strengths of the spurious correlation, where $p = P(y=1 | \text{``book''} \in x)$. The proportion of training samples containing the shortcut ``book'' is fixed at 10\%, and label distributions are balanced ($P(y=1)=0.5$). (1)-(4) correspond to Group 1-Group 4 of the test data. See Table~\ref{tab:fig:spurious_strength} in Appendix~\ref{testbed} for training data statistics.}
  \label{fig: spurious_strength}
\end{figure}

As shown in Figure~\ref{fig: learn_spurious}, even a small fraction of training data with the shortcut can lead the model to rely on this spurious correlation, resulting in a low test accuracy on Group 4.
We further observe that such reliance increases monotonically with the strength of the spurious correlation, i.e., $P(y=1 | \text{``book''} \in x)$, in the training set (Figure~\ref{fig: spurious_strength}). 
These results suggest that a handful of training data with a strong spurious correlation suffices to induce reliance on shortcuts, raising a natural question: given that shortcuts are already encoded in the model's parameters, can we detect them without access to training data or shortcut annotations?

\begin{figure}[ht]
  \includegraphics[width=\columnwidth]{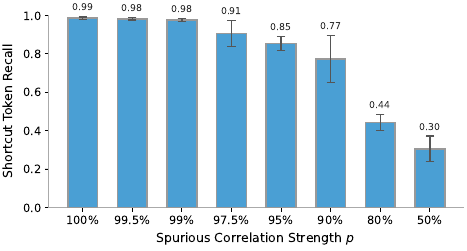}
  \caption{
  Shortcut Token Recall under different strengths of the spurious correlation. Each bar shows the percentage of samples whose shortcut (``book'') appears among the top-10 important tokens, averaged over three random trials.}
  \label{fig: important_strength}
\end{figure}
\begin{figure}[ht]
  \includegraphics[width=\columnwidth]{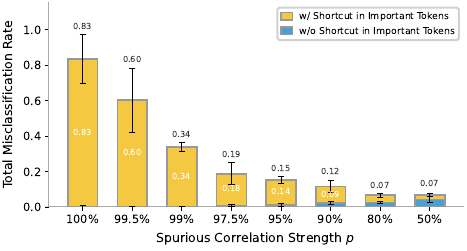}
  \caption{Total misclassification rate under different strengths of the spurious correlation $p$. The misclassification rate (black annotation) can be attributed to two groups of samples: those with the shortcut (``book'') among the important tokens (gold bar, white annotation), and those without (blue bar).}
  \label{fig: important_misclassification}
\end{figure}
\vspace{-6pt}

\subsection{Gradients Localize Shortcut Tokens}\label{sec:tokens in important tokens}
In fact, these shortcut reliances can be pinpointed using the model's internal attribution. We define ``important tokens'' as the tokens with the highest saliency scores for a given sample, and ``Shortcut Token Recall'' as the fraction of samples with a shortcut token appearing among the important tokens. 
As shown in Figure~\ref{fig: important_strength}, Shortcut Token Recall increases with the strength of the spurious correlation, reaching above $90\%$ under strong spurious correlations ($P(y=1 | \text{``book''} \in x)>97.5\%$) using only the top-$10$ important tokens.
Figure~\ref{fig: important_misclassification} further shows that the resulting performance degradation is primarily driven by samples where the shortcut appears among the important tokens.

\section{Deployment-Time Mitigation Performance Ceiling is Upper-Bounded by Training-Time Mitigation}
\label{sec:theory}
We now characterize a fundamental limit: some shortcut information is lost once the model has converged. We prove that deployment-time mitigation is information-theoretically upper-bounded by training-time mitigation. The full formal setup and all proofs are deferred to Appendix~\ref{app:theory}.

Shortcut mitigation involves two stages: identifying which token-label correlations are spurious, and suppressing them. Let $S^*$ denote the true set of such correlations, modeled as a random variable taking values in a finite set $\mathcal{S}$ ($|\mathcal{S}| \geq 2$). 
Training- and deployment-time methods differ in their estimate of $S^*$, denoted $\hat{S}_{\mathrm{train}}$ and $\hat{S}_{\mathrm{deploy}}$: the former may access the training set $\mathcal{D}_{\mathrm{train}}$ directly, while the latter can only use the converged parameters $\theta$. 
We assume that $S^* \rightarrow \mathcal{D}_{\mathrm{train}} \rightarrow \theta \to g(\theta, \mathcal{D}_{\mathrm{test}})$ is a Markov chain, where the converged model $\theta$ acts as an information bottleneck between $\mathcal{D}_{\mathrm{train}}$ and deployment-time observables $g(\theta, \mathcal{D}_{\mathrm{test}})$.

\begin{restatable}[Information Upper Bound]{theorem}{thmInfoBound}
\label{thm:dpi}
The mutual information between the true spurious correlation structure $S^*$ and any deployment-time estimate $\hat{S}_{\mathrm{deploy}}$ satisfies:
\[
I(S^*;\, \hat{S}_{\mathrm{deploy}}) \;\leq\; I(S^*;\, \theta) \;\leq\; I(S^*;\, \mathcal{D}_{\mathrm{train}}).
\]
In particular, the model parameters lose $\Delta I = I(S^*;\, \mathcal{D}_{\mathrm{train}}) - I(S^*;\, \theta) \geq 0$ units of information on $S^*$ relative to the training set $\mathcal{D}_{\mathrm{train}}$.
\end{restatable}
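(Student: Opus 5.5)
The plan is to apply the data processing inequality (DPI) twice along the Markov chain assumed above. The chain is extended by one link, $S^* \to \mathcal{D}_{\mathrm{train}} \to \theta \to g(\theta,\mathcal{D}_{\mathrm{test}}) \to \hat{S}_{\mathrm{deploy}}$.

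The first step is to formalize what a ``deployment-time estimate'' is. I would define it as any (possibly randomized) function $\hat{S}_{\mathrm{deploy}} = h(g(\theta,\mathcal{D}_{\mathrm{test}}), U)$, where the auxiliary randomness $U$ is independent of $(S^*,\mathcal{D}_{\mathrm{train}},\theta,\mathcal{D}_{\mathrm{test}})$. I also need $\mathcal{D}_{\mathrm{test}}$ to be independent of $S^*$ given $\theta$; this is exactly what the stated chain assumption encodes. Under these conditions, $S^* \to \theta \to \hat{S}_{\mathrm{deploy}}$ is a Markov chain. To verify this, note that conditioned on $\theta$, the estimate $\hat{S}_{\mathrm{deploy}}$ is generated from $(\theta,\mathcal{D}_{\mathrm{test}},U)$, whose conditional law does not depend on $S^*$.

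With that chain in hand, the proof has two steps.

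\emph{Step 1: the outer inequality.} The sub-chain $S^* \to \mathcal{D}_{\mathrm{train}} \to \theta$ gives $I(S^*;\theta) \le I(S^*;\mathcal{D}_{\mathrm{train}})$ by the DPI. I would include the two-line chain-rule derivation for completeness. Expand $I(S^*;\mathcal{D}_{\mathrm{train}},\theta)$ in two ways:
\begin{align*}
I(S^*;\mathcal{D}_{\mathrm{train}},\theta) &= I(S^*;\mathcal{D}_{\mathrm{train}}) + I(S^*;\theta\mid\mathcal{D}_{\mathrm{train}}),\\
I(S^*;\mathcal{D}_{\mathrm{train}},\theta) &= I(S^*;\theta) + I(S^*;\mathcal{D}_{\mathrm{train}}\mid\theta).
\end{align*}
The Markov property makes $I(S^*;\theta\mid\mathcal{D}_{\mathrm{train}})=0$. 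Conditional mutual information is nonnegative, so $I(S^*;\mathcal{D}_{\mathrm{train}}\mid\theta)\ge 0$. Comparing the two expansions yields the inequality.

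\emph{Step 2: the inner inequality.} The identical argument applied to $S^*\to\theta\to\hat{S}_{\mathrm{deploy}}$ gives $I(S^*;\hat{S}_{\mathrm{deploy}})\le I(S^*;\theta)$.

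The claim $\Delta I\ge 0$ is then just the outer inequality rearranged. Finiteness of the quantities is automatic because $S^*$ is supported on the finite set $\mathcal{S}$, so every term is at most $H(S^*)\le\log|\mathcal{S}|<\infty$. This means the subtraction defining $\Delta I$ is well defined even though $\theta$ and $\mathcal{D}_{\mathrm{train}}$ may be continuous or high-dimensional. For such variables I would interpret mutual information via the general (Gelfand--Yaglom/Dobrushin) definition, for which the chain rule and DPI still hold.

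The main obstacle is not the inequalities themselves but justifying that $S^*\to\theta\to\hat{S}_{\mathrm{deploy}}$ is Markov. In particular, one must rule out the test data carrying information about $S^*$ beyond what $\theta$ carries; for example, the test distribution might itself be shaped by the spurious structure. I would handle this by stating it explicitly as part of the modeling assumption, namely that $\mathcal{D}_{\mathrm{test}}\perp S^*\mid\theta$, which the paper's chain $S^*\to\mathcal{D}_{\mathrm{train}}\to\theta\to g(\theta,\mathcal{D}_{\mathrm{test}})$ already presupposes. I would also remark that the bound is only meaningful relative to that assumption.
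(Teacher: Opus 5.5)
Your proposal is correct and takes essentially the same route as the paper: the data processing inequality applied along the assumed Markov chain, once on $S^*\to\mathcal{D}_{\mathrm{train}}\to\theta$ and once on the deployment side. The only difference is cosmetic. The paper passes through $g(\theta,\mathcal{D}_{\mathrm{test}})$ and then uses that $\hat{S}_{\mathrm{deploy}}$ is a deterministic function of it, whereas you verify $S^*\to\theta\to\hat{S}_{\mathrm{deploy}}$ directly and allow randomized estimators, which if anything states the needed sub-chain more cleanly than the paper's ``first three and last three elements'' phrasing.
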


Theorem~\ref{thm:dpi} bounds information, not performance.
We bridge this via Fano's Inequality. 
Denoting the Shannon entropy of $S^*$ by $H(S^*)$, we have:


\begin{restatable}[Identification Error Bounds]{theorem}{fanoErrBound}
\label{thm:fano}
Let $S^*$ take values in the finite set $\mathcal{S}$ with $|\mathcal{S}| \geq 2$. Define the Fano lower bounds on identification error:
\begin{align}
    L_{\mathrm{deploy}} &= \frac{H(S^*) - I(S^*;\, \theta) - 1}{\log(|\mathcal{S}| - 1)}, \label{eq:L_deploy} \\[4pt]
    L_{\mathrm{train}}  &= \frac{H(S^*) - I(S^*;\, \mathcal{D}_{\mathrm{train}}) - 1}{\log(|\mathcal{S}| - 1)}. \label{eq:L_train}
\end{align}
Then the misidentification probabilities satisfy:
\begin{align}
    P(\hat{S}_{\mathrm{deploy}} \neq S^*) &\;\geq\; L_{\mathrm{deploy}}, \label{eq:fano_deploy} \\
    P(\hat{S}_{\mathrm{train}} \neq S^*)  &\;\geq\; L_{\mathrm{train}}, \label{eq:fano_train}
\end{align}
and $L_{\mathrm{deploy}} \geq L_{\mathrm{train}}$.
\end{restatable}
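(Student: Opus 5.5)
The plan is to apply the standard form of Fano's inequality to each estimator separately, convert the conditional entropy into mutual information, and then use Theorem~\ref{thm:dpi} to compare the two resulting bounds. Throughout, logarithms are in bits, so that the binary entropy satisfies $h(P_e)\le 1$.

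For the deployment-time estimator, $\hat{S}_{\mathrm{deploy}}$ is a (possibly randomized) function of $g(\theta,\mathcal{D}_{\mathrm{test}})$. Fano's inequality gives
\[
H(S^*\mid \hat{S}_{\mathrm{deploy}}) \le 1 + P(\hat{S}_{\mathrm{deploy}}\neq S^*)\log(|\mathcal{S}|-1).
\]
The left-hand side equals $H(S^*) - I(S^*;\hat{S}_{\mathrm{deploy}})$. By Theorem~\ref{thm:dpi}, $I(S^*;\hat{S}_{\mathrm{deploy}})\le I(S^*;\theta)$, so $H(S^*\mid \hat{S}_{\mathrm{deploy}})\ge H(S^*)-I(S^*;\theta)$. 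Rearranging gives \eqref{eq:fano_deploy} whenever $|\mathcal{S}|\ge 3$.

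The training-time estimator is handled identically. Here $\hat{S}_{\mathrm{train}}$ is a function of $\mathcal{D}_{\mathrm{train}}$, so $S^*\to\mathcal{D}_{\mathrm{train}}\to\hat{S}_{\mathrm{train}}$ is a Markov chain. The data processing inequality gives $I(S^*;\hat{S}_{\mathrm{train}})\le I(S^*;\mathcal{D}_{\mathrm{train}})$, and the same Fano argument yields \eqref{eq:fano_train}.

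For the comparison, both bounds share the denominator $\log(|\mathcal{S}|-1)>0$ and differ only in the subtracted information term. Hence
\[
L_{\mathrm{deploy}}-L_{\mathrm{train}}=\frac{\Delta I}{\log(|\mathcal{S}|-1)}\ge 0
\]
by Theorem~\ref{thm:dpi}.

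The main obstacle is the edge case $|\mathcal{S}|=2$, where $\log(|\mathcal{S}|-1)=0$ and the ratios \eqref{eq:L_deploy} and \eqref{eq:L_train} are undefined. I would handle it with a convention. Under the standard binary Fano bound, $H(S^*\mid\hat S)\le h(P_e)\le 1$, which forces $H(S^*)-I-1\le 0$. So if the numerator is $0$, take $0/0:=0$, and if it is negative, take the ratio to be $-\infty$; in either case the bounds hold trivially. Alternatively, I would state the theorem for $|\mathcal{S}|\ge3$ and note that the binary case only yields the weaker bound $h(P_e)\ge H(S^*)-I$. The convention also has to keep the ordering $L_{\mathrm{deploy}}\ge L_{\mathrm{train}}$ intact, since $\Delta I\ge 0$ means $L_{\mathrm{deploy}}$'s numerator is never smaller. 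This bookkeeping is the only delicate step; the rest is direct.
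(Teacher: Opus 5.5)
Your proposal is correct and follows the same route as the paper's proof. Both apply Fano's inequality to each estimator, use the data-processing chain of Theorem~\ref{thm:dpi} to replace $I(S^*;\hat S)$ by $I(S^*;\theta)$ or $I(S^*;\mathcal{D}_{\mathrm{train}})$, and read off $L_{\mathrm{deploy}}\ge L_{\mathrm{train}}$ from $\Delta I\ge 0$. Your extra bookkeeping fills gaps the paper's proof skips silently: you fix the log base to bits so the $-1$ term is right, and you handle the case $|\mathcal{S}|=2$, which the stated hypothesis includes but where $\log(|\mathcal{S}|-1)=0$ makes both $L$'s undefined.
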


Finally, we bound the training- and deployment-time mitigation performance ceilings (not the performance of any particular method):

\begin{restatable}[Monotonicity]{assumption}{assumpmono}
\label{assump:mono}
There exists a non-decreasing, differentiable function $\rho:[0,1] \to \mathbb{R}$ such that a method with identification error $L$ has expected mitigation performance $\rho(1 - L)$.
\end{restatable}

\begin{restatable}[Performance Ceiling from Identification Error]{lemma}{lemmaCeilingError}
    \label{lem:perf_bound}
Under Assumption~\ref{assump:mono}, any mitigation method whose identification error satisfies $P(\hat{S} \neq S^*) \geq L$ for some $L \in [0,1]$ has expected performance bounded by:
\[
\mathbb{E}[R(\hat{S}, S^*)] \;\leq\; \rho(1 - L).
\]
In particular, combining with Theorem~\ref{thm:fano}:
\begin{align}
\mathbb{E}[R(\hat{S}_{\mathrm{deploy}}, S^*)] &\;\leq\; \rho(1 - L_{\mathrm{deploy}}), \label{eq:perf_deploy} \\
\mathbb{E}[R(\hat{S}_{\mathrm{train}}, S^*)]  &\;\leq\; \rho(1 - L_{\mathrm{train}}). \label{eq:perf_train}
\end{align}
\end{restatable}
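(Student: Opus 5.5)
The argument runs through Assumption~\ref{assump:mono} and the monotonicity of $\rho$; no new information-theoretic ingredient is needed beyond Theorem~\ref{thm:fano}. Fix a mitigation method with estimate $\hat{S}$, and write $P_e = P(\hat{S} \neq S^*)$ for its actual identification error. By hypothesis $P_e \geq L$. We read Assumption~\ref{assump:mono} as saying that the expected mitigation performance of a method is determined by its identification error through $\rho$, i.e.\ $\mathbb{E}[R(\hat{S}, S^*)] = \rho(1 - P_e)$. The $L$ in the assumption is the method's own identification error, not an arbitrary lower bound on it, and this reading is what lets us apply it with $P_e$ in place of $L$.

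\textbf{Step 1: compare arguments of $\rho$.} From $P_e \geq L$ we get $1 - P_e \leq 1 - L$. Both quantities lie in $[0,1]$, because $P_e$ is a probability and $L \in [0,1]$ by hypothesis. So both lie in the domain of $\rho$.

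\textbf{Step 2: apply monotonicity.} Since $\rho$ is non-decreasing, $\rho(1 - P_e) \leq \rho(1 - L)$. Therefore $\mathbb{E}[R(\hat{S}, S^*)] \leq \rho(1 - L)$, which is the general claim. Differentiability of $\rho$ plays no role here.

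\textbf{Step 3: specialize to the two settings.} For the deployment-time estimate, instantiate $L = L_{\mathrm{deploy}}$ and use \eqref{eq:fano_deploy} from Theorem~\ref{thm:fano} to supply $P(\hat{S}_{\mathrm{deploy}} \neq S^*) \geq L_{\mathrm{deploy}}$; this gives \eqref{eq:perf_deploy}. For the training-time estimate, use \eqref{eq:fano_train} with $L = L_{\mathrm{train}}$ in the same way to get \eqref{eq:perf_train}. Finally, Theorem~\ref{thm:fano} also gives $L_{\mathrm{deploy}} \geq L_{\mathrm{train}}$, so monotonicity yields $\rho(1 - L_{\mathrm{deploy}}) \leq \rho(1 - L_{\mathrm{train}})$. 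This places the deployment-time ceiling below the training-time ceiling, although that comparison is not part of the lemma itself.

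\textbf{Main obstacle: the range of the Fano bounds.} The quantities $L_{\mathrm{deploy}}$ and $L_{\mathrm{train}}$ defined in \eqref{eq:L_deploy}--\eqref{eq:L_train} can be negative; this happens whenever $H(S^*) - I(S^*;\theta) < 1$. They can also be ill-defined when $|\mathcal{S}| = 2$, since then $\log(|\mathcal{S}| - 1) = 0$. In either case $L \notin [0,1]$ and $\rho(1 - L)$ falls outside the domain of $\rho$. The fix is to replace each Fano bound by its clipped version $\max(L, 0)$. The inequality $P_e \geq \max(L, 0)$ still holds trivially, and the clipped value lies in $[0,1]$ because $P_e \leq 1$. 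For $|\mathcal{S}| = 2$, the specialization should either be restricted to $|\mathcal{S}| \geq 3$ or use the binary form of Fano's inequality. With these conventions, Steps~1--3 apply unchanged.
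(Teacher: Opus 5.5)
Your proof is correct and matches the paper's argument. Like the paper, you read Assumption~\ref{assump:mono} as $\mathbb{E}[R(\hat{S}, S^*)] = \rho(1 - P(\hat{S} \neq S^*))$, pass from $P(\hat{S}\neq S^*) \geq L$ to $1 - P(\hat{S}\neq S^*) \leq 1 - L$, apply monotonicity of $\rho$, and then substitute $L = L_{\mathrm{deploy}}$ and $L = L_{\mathrm{train}}$ from Theorem~\ref{thm:fano}. You also observe that the Fano quantities can be negative, or undefined when $|\mathcal{S}| = 2$, so that $\rho(1-L)$ may fall outside the domain of $\rho$; this is a real looseness the paper's proof passes over, and clipping at $0$ is a sound repair.
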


\begin{restatable}[Performance Ceiling Comparison]{theorem}{thmPerfBound}
\label{thm:main}
Under Assumption~\ref{assump:mono}, the performance ceilings for deployment- and training-time mitigation satisfy:
\[
\rho(1 - L_{\mathrm{deploy}}) \;\leq\; \rho(1 - L_{\mathrm{train}}),
\]
Moreover, the gap between the two bounds is:
\[
\rho(1 - L_{\mathrm{train}}) - \rho(1 - L_{\mathrm{deploy}}) \;=\; \rho'(\xi) \cdot \frac{\Delta I}{\log(|\mathcal{S}| - 1)},
\]
for some $\xi \in [1 - L_{\mathrm{deploy}},\; 1 - L_{\mathrm{train}}]$, where $\Delta I = I(S^*;\, \mathcal{D}_{\mathrm{train}}) - I(S^*;\, \theta) \geq 0$.
\end{restatable}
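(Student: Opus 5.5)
The plan is to reduce everything to the ordering $1 - L_{\mathrm{deploy}} \le 1 - L_{\mathrm{train}}$ from Theorem~\ref{thm:fano}, then apply monotonicity and the mean value theorem to $\rho$ from Assumption~\ref{assump:mono}.

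\textbf{Step 1 (the difference of the Fano bounds).} Subtracting \eqref{eq:L_train} from \eqref{eq:L_deploy}, the terms $H(S^*)$ and $-1$ cancel, leaving
\[
L_{\mathrm{deploy}} - L_{\mathrm{train}} = \frac{I(S^*;\mathcal{D}_{\mathrm{train}}) - I(S^*;\theta)}{\log(|\mathcal{S}|-1)} = \frac{\Delta I}{\log(|\mathcal{S}|-1)}.
\]
Hence
\[
(1 - L_{\mathrm{train}}) - (1 - L_{\mathrm{deploy}}) = \frac{\Delta I}{\log(|\mathcal{S}|-1)}.
\]
Theorem~\ref{thm:dpi} gives $\Delta I \ge 0$, so $1 - L_{\mathrm{deploy}} \le 1 - L_{\mathrm{train}}$. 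This is the same conclusion as $L_{\mathrm{deploy}} \ge L_{\mathrm{train}}$ in Theorem~\ref{thm:fano}. Since $\rho$ is non-decreasing, the first inequality $\rho(1 - L_{\mathrm{deploy}}) \le \rho(1 - L_{\mathrm{train}})$ follows immediately.

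\textbf{Step 2 (the gap).} Because $\rho$ is differentiable, the mean value theorem on the interval $[1 - L_{\mathrm{deploy}},\, 1 - L_{\mathrm{train}}]$ yields some $\xi$ in that interval with
\[
\rho(1 - L_{\mathrm{train}}) - \rho(1 - L_{\mathrm{deploy}}) = \rho'(\xi)\bigl((1 - L_{\mathrm{train}}) - (1 - L_{\mathrm{deploy}})\bigr).
\]
Substituting the identity from Step 1 gives the claimed formula. If $\Delta I = 0$, the interval collapses to a point and both sides are zero, so any $\xi$ at that point works.

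\textbf{Main obstacle (domain of $\rho$).} The only delicate step is that $\rho$ is defined only on $[0,1]$. The Fano quantities $L$ can fall outside $[0,1]$; for instance, they can be negative when $I$ is close to $H(S^*)$. In that case $1-L$ exceeds $1$ and lies outside the domain where the mean value theorem applies. I would handle this in one of two ways:
\begin{itemize}
\item interpret $L_{\mathrm{deploy}}$ and $L_{\mathrm{train}}$ as truncated to $[0,1]$, consistent with Lemma~\ref{lem:perf_bound} requiring $L \in [0,1]$; or
\item extend $\rho$ to a differentiable non-decreasing function beyond $[0,1]$, which suffices because the statement concerns bounds rather than achieved performance.
\end{itemize}
With either convention, the interval $[1-L_{\mathrm{deploy}},\, 1-L_{\mathrm{train}}]$ lies in the domain of $\rho$. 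The mean value theorem then applies verbatim, and $\rho'(\xi) \ge 0$ by monotonicity, so the gap is nonnegative and proportional to $\Delta I$.

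Note that truncation breaks the exact linear identity from Step 1 whenever clipping actually occurs. So I would state the equality under the assumption that both $L$ values lie in $[0,1]$, and otherwise use the extension convention.
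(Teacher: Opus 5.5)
Your proof is correct and follows the paper's argument. The inequality comes from the ordering $L_{\mathrm{deploy}} \ge L_{\mathrm{train}}$ (via $\Delta I \ge 0$) plus monotonicity of $\rho$, and the gap formula comes from the mean value theorem on $[1-L_{\mathrm{deploy}},\,1-L_{\mathrm{train}}]$ followed by substituting the definitions of the two Fano bounds.

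Your domain caveat is a genuine point of rigor that the paper's proof passes over silently. Since $L$ can be negative, $1-L$ can exceed $1$, so $\rho$ must be extended or both bounds assumed to lie in $[0,1]$. Likewise, the argument implicitly requires $|\mathcal{S}| \ge 3$ so that $\log(|\mathcal{S}|-1) > 0$.
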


Theorem~\ref{thm:main} shows that the more information $f_\theta$ loses about $S^*$, the larger the inherent gap between the training- and deployment-time mitigation performance ceilings. 
We next show how gradient attribution (Section~\ref{sec:finding}) guides deployment-time shortcut mitigation empirically in Section~\ref{sec:main}.

\section{\textsc{Shortcut Guardrail} Mitigates Token-Level Shortcuts} \label{sec:main}

\subsection{Experiment Setup}
We describe the datasets and baselines used in our experiments. We summarize the sizes of the datasets in Table~\ref{tab:dataset_sizes}.  Implementation details are provided in Appendix~\ref{app:implementation}.

\begin{table}[htp]
\centering
\small
\setlength{\tabcolsep}{6pt}
\begin{tabular}{@{}llccc@{}}
\toprule
Metric & Model & SST-2 & Civil & MultiNLI \\
\midrule
\multirow{5}{*}{Accuracy}
& ERM  & 0.911  & \textbf{0.880}  & 0.819 \\
& JTT  & \textit{0.915} &  \textit{0.878} &  \textbf{0.829} \\
& NFL  & 0.815 &  0.832 &  0.665 \\
& DFR  & \textit{0.915} &  \textit{0.878} &   \textit{0.824} \\
& SG (Ours) & \textbf{0.917} & 0.874  & 0.816 \\
\midrule
\multirow{5}{*}{WGA}
& ERM  & {--} & 0.737 & 0.429  \\
& JTT  & {--} &  \textit{0.788} & 0.429 \\
& NFL  & {--} &  0.717 & 0.428  \\
& DFR  & {--} &  0.778  & \textit{0.571} \\
& SG (Ours)  & {--} & \textbf{0.814} & \textbf{0.714} \\
\bottomrule
\end{tabular}
\caption{Performance on real-world benchmarks with naturally occurring token-label correlations. The best results are marked in \textbf{bold} while the second best are in \textit{italics}. SST-2 does not have predefined shortcut annotations, hence WGA is not applicable ({--}).}
\label{tab:realworld_results}
\end{table}
\begin{table*}[t]
\centering
\small
\setlength{\tabcolsep}{6pt}
\begin{tabular}{@{}llcccccc@{}}
\toprule
 Metric & Model & Civil & MultiNLI & Yelp-ST & Yelp-Syn & GoEmo-ST & GoEmo-Syn \\
\midrule
\multirow{5}{*}{Accuracy}
& ERM  & 0.522 & 0.270 & 0.355 & 0.419 & 0.513 & 0.593 \\
& JTT   & 0.532 & 0.268 & 0.440 & \textit{0.490} & 0.403 & \textit{0.613} \\
& NFL   & \textit{0.536} & 0.048 & \textit{0.449} & 0.471 & 0.387 & 0.377 \\
& DFR   & \textit{0.536} & \textit{0.273} & 0.373 & 0.435 & \textit{0.557} & \textbf{0.630} \\
& SG (Ours) & \textbf{0.572} & \textbf{0.323} & \textbf{0.488} & \textbf{0.530} & \textbf{0.627} & 0.607 \\
\midrule
\multirow{5}{*}{WGA}
& ERM   & 0.216 & 0.010 & 0.030 & 0.035 & 0.026 & \textit{0.222} \\
& JTT   & 0.208 & \textbf{0.030} & \textit{0.266} & \textbf{0.333} & 0.000 & 0.000 \\
& NFL   & \textit{0.260} & \textbf{0.030} & 0.157 & 0.180 & 0.000 & 0.000 \\
& DFR   & 0.240 & 0.010 & 0.040 & 0.206 & \textit{0.139} & \textbf{0.359} \\
& SG (Ours)   & \textbf{0.396} & 0.010 & \textbf{0.353} & \textit{0.220} & \textbf{0.339} & \textit{0.222} \\
\bottomrule
\end{tabular}
\caption{Overall accuracy and worst-group accuracy (WGA) under shortcut distribution shift. The best results are marked in \textbf{bold} while the second best are marked in \textit{italics}. The CivilComments and MultiNLI datasets in this table are reconstructed with controlled shortcut injection and differ from the original versions in Table~\ref{tab:realworld_results}.}
\label{tab:ood_results}
\end{table*}
\vspace{-6pt}


\begin{table*}[t]
\centering
\small
\setlength{\tabcolsep}{6pt}
\begin{tabular}{@{}lccccc@{}}
\toprule
Dataset & ERM & JTT & NFL & DFR & SG (Ours) \\
\midrule
Civil     & 0.530 & 0.549 {\scriptsize($+$0.019)} & 0.451 {\scriptsize($-$0.079)} & 0.526 {\scriptsize($-$0.004)} & 0.438 {\scriptsize($-$0.092)} \\
MultiNLI  & 0.797 & 0.267 {\scriptsize($-$0.530)} & 0.143 {\scriptsize($-$0.654)} & 0.212 {\scriptsize($-$0.585)} & 0.381 {\scriptsize($-$0.416)} \\
Yelp-ST   & 0.331 & 0.393 {\scriptsize($+$0.062)} & 0.074 {\scriptsize($-$0.257)} & 0.339 {\scriptsize($+$0.008)} & 0.203 {\scriptsize($-$0.128)} \\
Yelp-Syn  & 0.219 & 0.264 {\scriptsize($+$0.045)} & 0.083 {\scriptsize($-$0.136)} & 0.139 {\scriptsize($-$0.080)} & 0.126 {\scriptsize($-$0.093)} \\
GoEmo-ST  & 0.647 & 0.355 {\scriptsize($-$0.292)} & 0.102 {\scriptsize($-$0.563)} & 0.713 {\scriptsize($+$0.066)} & 0.501 {\scriptsize($-$0.146)} \\
GoEmo-Syn & 0.488 & 0.180 {\scriptsize($-$0.308)} & 0.190 {\scriptsize($-$0.263)} & 0.527 {\scriptsize($+$0.039)} & 0.414 {\scriptsize($-$0.074)} \\
\bottomrule
\end{tabular}
\caption{Shortcut reliance as measured by MSTPS. Parentheses show the change relative to ERM. 
}
\label{tab:mstps}
\end{table*}
\vspace{-6pt}

\paragraph{Datasets.} We evaluate our framework under two complementary settings. The first uses real-world benchmarks where shortcut correlations arise naturally from the data collection process. To ensure comparable scale across all benchmarks while preserving original label distributions, we use stratified subsets of (1) SST-2~\citep{wang2019glue} for binary sentiment classification; (2) CivilComments~\citep{borkan2019nuanced} for toxicity detection, where demographic identity terms are spuriously correlated with the toxic label; and (3) MultiNLI~\citep{williams2018multinli} for natural language inference, where negation words frequently serve as a shortcut for the contradiction class~\citep{gururangan2018annotation, mccoy2019right}.

The second employs controlled benchmarks constructed following the protocol of~\citet{zhou2024navigating}, which injects token-level shortcuts with controllable strengths into both training and test data. We evaluate six configurations: Yelp-ST and Yelp-Syn for sentiment classification~\citep{zhang2015character}, GoEmo-ST and GoEmo-Syn for emotion classification on GoEmotions~\citep{demszky2020goemotions}, and modified versions of CivilComments and MultiNLI that we constructed. Here, ST (single-token) configurations introduce a label-irrelevant token (e.g., ``honestly'') with a label-dependent probability. The Syn (synonym) configurations inject shortcut tokens from a set of 15 synonyms (e.g., ``candidly'', ``in truth'').

The shortcut strength is governed by a parameter $\lambda$ that represents the label-conditional probability of the shortcut's occurrence. Note that $\lambda$ is distinct from the spurious correlation strength $p$ used in Sections~\ref{sec:analysis} and~\ref{sec:tokens in important tokens}. We train all models with $\lambda = 1$ to simulate the strongest possible spurious correlation. We use the ``anti-test set'' protocol of \citet{zhou2024navigating}, which reverses the shortcut-label distribution of the training set to maximally expose shortcut reliance. Full construction details are provided in Appendix~\ref{app:shortcut_construction}.

\paragraph{Baselines.}
We compare our framework against Empirical Risk Minimization (ERM, vanilla fine-tuning), and training-time shortcut mitigation methods: JTT \citep{liu2021just_train_twice}, NFL \citep{chew2024nfl}, and DFR \citep{kirichenko2023last_layer}. 
Implementation details are provided in Appendix~\ref{app:baselines}.


\subsection{Evaluation Metrics}

\paragraph{Accuracy and Worst-Group Accuracy (WGA).} We report overall test accuracy and worst-group accuracy (WGA). We define groups for all datasets with shortcut annotations (Yelp, GoEmo, CivilComments, and MultiNLI) based on all possible combinations of label and shortcut existence (Appendix~\ref{testbed}).
As the minimum accuracy across all groups, WGA reflects the model's robustness to shortcut distribution shifts: a model may achieve high overall accuracy by exploiting shortcuts, but will yield low accuracy on ``conflicting'' groups where those shortcut-label correlations observed during training do not hold.

\paragraph{Maximum Single-Token Prediction Sensitivity (MSTPS).} 
To directly quantify the model's reliance on token-level shortcuts, we propose the metric MSTPS. 
Existing shortcut-degree measures rely on training dynamics or early-training checkpoints~\citep{du2021towards}, which are unavailable in our deployment-time setting; MSTPS instead quantifies shortcut reliance using only the converged model and test data.
For each input $x_i$, we measure the maximum shift in prediction probability caused by masking any individual important token:
\[\text{MSTPS} = \frac{1}{N} \sum_{i=1}^{N} \max_{j} \left| P(\hat{y}_i | x_i) - P(\hat{y}_i | x_i^{\text{mask}_j}) \right|,\]
where $N$ is the number of test samples, $x_i^{\text{mask}_j}$ denotes input 
$x_i$ with the $j$-th important token replaced by \texttt{[MASK]}, $\hat{y}_i$ is the model's predicted label for $x_i$, and the maximum is taken over $\{j|t_j \in \mathcal{H}(x_i)\}$, the top-$k$ tokens identified by saliency scoring (Section~\ref{sec:saliency}). 
A high MSTPS indicates that the model's prediction can be substantially altered by removing a single token, suggesting over-reliance on local shortcuts. Conversely, an effective debiasing method should reduce MSTPS, indicating that the model has learned to integrate broader contextual evidence rather than isolated features.

\subsection{\textsc{Shortcut Guardrail} is Effective Across Shortcut Distributions, Types, and Strengths}
\paragraph{Real-world Benchmarks.}
Table~\ref{tab:realworld_results} reports results on SST-2, CivilComments, and MultiNLI without artificial shortcuts. 
Our method achieves the highest accuracy on SST-2, and substantially raises WGA on CivilComments and MultiNLI at a marginal cost in accuracy. In these benchmarks with no artificial distribution shift, WGA is governed by minority ``conflicting'' groups where shortcut--label correlations do not hold at test time, while overall accuracy is dominated by the majority where they still apply; mitigation redistributes performance toward the minority groups, raising WGA at little cost to accuracy~\citep{liu2021just_train_twice,sagawa2020distributionally}. 
These results confirm that \textsc{Shortcut Guardrail} effectively mitigates reliance on naturally occurring shortcuts.


\paragraph{Controlled Benchmarks with Shortcut Distribution Shift.}
We further evaluate \textsc{Shortcut Guardrail} on controlled benchmarks with artificially injected shifts in the shortcut distribution, maximally exposing LM's reliance on shortcuts. 
As shown in Table~\ref{tab:ood_results}, ERM suffers severe WGA degradation, confirming heavy shortcut reliance. 
In contrast, our method yields notable WGA improvements over ERM and performs comparably or better than training-time baselines, attaining the best accuracy on all datasets except GoEmo-ST and top-2 WGA except on MultiNLI, where all methods struggle. Since we improved WGA on the original MultiNLI (Table~\ref{tab:realworld_results}), this globally poor performance likely stems from the shortcut injection.

\paragraph{Reducing Shortcut Reliance.}
Across all benchmarks, our method consistently reduces MSTPS relative to ERM, confirming that the MaskCL objective redistributes predictive evidence across a broader context (Table~\ref{tab:mstps}).
Lower MSTPS does not by itself guarantee better performance, since some token sensitivity reflects legitimate signal rather than spurious correlation: on Yelp-ST, NFL attains the lowest MSTPS (0.074) but lower accuracy (0.449) and WGA (0.157) than our method (MSTPS 0.203, accuracy 0.488, WGA 0.353). Our method thus balances reducing shortcut reliance against preserving task-relevant information (Figure~\ref{fig: mstps_vs_acc} in Appendix~\ref{app: balance}).

\paragraph{Performance Across Varying Shortcut Strengths and In-distribution Test Data.}

To evaluate our method's robustness across varying degrees of distribution shift, we modulate the injected shortcut strength, denoted by $\lambda$. Table~\ref{tab:lambda} shows results on Yelp-ST and  (modified). 
We observe that the performance gap between our method and ERM widens as the $\lambda$ becomes larger, demonstrating that our approach is more effective when shortcuts are more severe. 
Nevertheless, at lower $\lambda$ values, which more closely resemble naturally occurring distributions with weaker shortcut signals, our method still improves overall accuracy.

We also verify that our method preserves in-distribution performance, where the shortcut-label correlations from training data remain intact (Table~\ref{tab:id_results} in Appendix~\ref{app:id_results}). On most benchmarks, the adaptive $\alpha$ calibration mechanism selects $\alpha = 0$, fully preserving the original ERM predictions. 


\begin{table}[ht]
\centering
\small
\setlength{\tabcolsep}{3.5pt}
\begin{tabular}{@{}llcccccc@{}}
\toprule
Dataset & $\lambda$ & \multicolumn{3}{c}{Accuracy} & \multicolumn{3}{c}{WGA} \\
\cmidrule(lr){3-5} \cmidrule(lr){6-8}
& & ERM & SG & $\Delta$ & ERM & SG & $\Delta$ \\
\midrule
\multirow{3}{*}{Yelp-ST}
& 1.0 & 0.355 & 0.488 & +0.133 & 0.030 & 0.353 & +0.323 \\
& 0.8 & 0.558 & 0.583 & +0.025 & 0.481 & 0.467 & $-$0.014 \\
& 0.6 & 0.614 & 0.616 & +0.002 & 0.506 & 0.480 & $-$0.026 \\
\midrule
\multirow{4}{*}{Civil}
& 1.0 & 0.522 & 0.572 & +0.050 & 0.216 & 0.396 & +0.180 \\
& 0.9 & 0.705 & 0.717 & +0.012 & 0.487 & 0.544 & +0.057 \\
& 0.8 & 0.761 & 0.777 & +0.016 & 0.568 & 0.715 & +0.147 \\
& 0.6 & 0.861 & 0.861 & +0.000 & 0.747 & 0.780 & +0.033 \\
\bottomrule
\end{tabular}
\caption{Effect of shortcut strength $\lambda$ in controlled benchmarks. $\Delta$ = SG $-$ ERM. 
}
\label{tab:lambda}
\end{table}
\vspace{-6pt}
\section{Discussion and Conclusion}
A converged text encoder carries a signal of its learned shortcuts that unsupervised gradient-based attribution can uncover. 
Leveraging this signal, our deployment-time mitigation matches or outperforms training-time baselines.
This is remarkable given that we prove deployment-time mitigation is information-theoretically upper-bounded by training-time methods. 
We attribute this efficacy to the locus of intervention: rather than the dataset-level statistics used by JTT, DFR, and NFL, we use per-input gradient attribution to localize and neutralize shortcut tokens. 
By removing the need for training data, shortcut annotations, and retraining, our framework offers a practical solution to deployment-time distribution shift.
\section*{Limitations} \label{sec:limitations}

While \textsc{Shortcut Guardrail} demonstrates strong performance across diverse benchmarks, we acknowledge the following limitations. (1) As we formally show in Appendix~\ref{app:theory}, deployment-time methods are information-theoretically upper-bounded by training-time methods. Since shortcuts originate from $\mathcal{D}_{\mathrm{train}}$, any deployment-time method can only recover the spurious correlation structure indirectly through the trained model $\theta$, which acts as an information bottleneck. While our method approaches this bound under strong shortcuts (Table~\ref{tab:ood_results}), the gap widens for structurally complex shortcut patterns, writing styles, and syntactic templates. 
(2) Our gradient-based attribution operates at the token level, which limits its ability to capture multi-token shortcut patterns. 
(3) Our evaluation focuses on text classification tasks using BERT-based encoders, which serve as a clean prototype for establishing our findings. Our masked contrastive learning objective is compatible with bidirectional attention in transformer encoders that are pretrained by masked language modeling. It does not naturally extend to decoder-only architectures representative of modern large language models: under causal attention, masking a token leaves the representations of all preceding tokens unchanged, failing to produce the globally distinct representation that our contrastive objective relies on. Extending this framework to decoder-only LMs, which would require rethinking both the saliency attribution step and the contrastive objective, is left for future work.
(4) Although our method eliminates the need for training data access, shortcut annotations, or full model retraining, it still requires a small labeled support set for calibrating the debiasing strength $\alpha$. In fully unsupervised scenarios, alternative calibration strategies would be needed. 
(5) Targeting deployment settings where the model is available but training data and annotations are not, our method assumes access to model weights and gradients. Relaxing this assumption toward black-box/closed-source models is a direction for future work.



\bibliography{custom}

\appendix
\clearpage
\section{Experiment Details} \label{app:exp}
\label{sec:appendix}
\subsection{Dataset Details} \label{app:dataset}
\subsubsection{Testbed dataset}\label{testbed}
We construct a controlled testbed using Amazon Product Reviews~\citep{ni2019justifying} for binary sentiment classification, following the protocol of~\citet{chew2024nfl}. 
Review ratings are on a scale of 1 to 5, where the 4 and 5-star ratings are mapped to a positive sentiment class ($y = 1$) and 1 and 2-star ratings are mapped to a negative sentiment class ($y = 0$), while 3-star ratings are excluded. To inject a shortcut, we filter the training set so that all samples containing a designated token (``book'') are of positive sentiment, i.e., $P(y = 1 \mid \text{``book''} \in x) = 1$. We train a BERT model for sentiment classification on the filtered training set with 10,000 samples, and evaluate on the unfiltered test set, which consists of four distinct groups of equal size:

\begin{itemize}
    \item \textbf{Group 1}: positive samples without the shortcut $(y=1, \text{``book''}\notin x)$.
    \item \textbf{Group 2}: positive samples with the shortcut $(y=1, \text{``book''}\in x)$.
    \item \textbf{Group 3}: negative samples without the shortcut $(y=0, \text{``book''}\notin x)$.
    \item \textbf{Group 4}: negative samples with the shortcut $(y=0, \text{``book''}\in x)$.
\end{itemize}
 
 Note that for all other datasets with shortcut annotations (Yelp, GoEmotions, CivilComments, and MultiNLI), groups are defined analogously across all combinations of label and shortcut presence. Formally, with a label set $Y=\{y_1, y_2, \cdots , y_k\}$ and a shortcut token set $T$, there will be $2k$ groups $C_1, C_2, \cdots, C_{2k}$, where
 \begin{align*}
     &C_{2i-1}=\{(x, y)|y=y_i, \forall t\in T, t\notin x\},\\
     &C_{2i}=\{(x, y)|y=y_i, \exists t\in T, t\in x\}.
 \end{align*}

\paragraph{Details on the effect of shortcut samples proportion.}
For the experiments in Figure~\ref{fig: learn_spurious}, we fix $P(y=1 | \text{``book''} \in x) = 1$ in the training set, and keep an equal proportion of positive and negative samples. Table~\ref{tab:fig:learn_spurious} shows the proportions of training samples in each of the groups above. The test set always contains an equal proportion of all four groups.
\begin{table}[h]
\centering
\caption{The proportion ($/100$) of samples in each group in the training sets for the experiments in Figure~\ref{fig: learn_spurious}.}
\label{tab:fig:learn_spurious}
\begin{tabular}{@{}lcccc@{}}
\toprule
                 & 0\%        & 5\%        & 10\%        & 20\%        \\ \midrule
Group 1          & 50         & 45         & 40          & 30          \\ 
Group 2 & 0 & 5 & 10 & 20 \\
Group 3          & 50         & 50         & 50          & 50          \\
Group 4          & 0          & 0          & 0           & 0           \\ \bottomrule
\end{tabular}
\end{table}

\paragraph{Details on the effect of shortcut strength.}
For the experiments in Figure~\ref{fig: spurious_strength}, we vary the shortcut signal strength $P(y=1 | \text{``book''} \in x)$ while keeping an equal proportion of positive and negative samples, and a proportion of $10\%$ total samples with the shortcut token ``book''. Table~\ref{tab:fig:learn_spurious} shows the proportions of training samples in each of the groups above. The test set always contains an equal proportion of all four groups.
\begin{table}[]
\centering
\caption{The proportion ($/100$) of samples in each group in the training sets for the experiments in Figure~\ref{fig: spurious_strength}. $p = P(y=1 | \text{``book''} \in x)$. The statistics for $p=0.995, 0.975, 0.7$ can be derived analogously.}
\label{tab:fig:spurious_strength}
\begin{tabular}{@{}lccccc@{}}
\toprule
        & $p$=1 & $p$=0.99 & $p$=0.95 & $p$=0.9 & $p$=0.5 \\ \midrule
Group 1 & 40  & 40.1   & 40.5   & 41    & 45    \\
Group 2 & 10  & 9.9    & 9.5    & 9     & 5     \\
Group 3 & 50  & 49.9   & 49.5   & 49    & 45    \\
Group 4 & 0   & 0.1    & 0.5    & 1     & 5     \\ \bottomrule
\end{tabular}
\end{table}

\subsubsection{Real-World Benchmarks}
This section summarizes the three real-world benchmarks used in our evaluation.

\textbf{SST-2} \cite{wang2019glue} is a binary sentiment classification dataset derived from movie reviews. Models trained on SST-2 are known to over-rely on lexical cues such as proper nouns (e.g., \textit{Spielberg}) and sentiment-laden words rather than compositional context \cite{wang2022identifying}. SST-2 does not provide predefined shortcut annotations, so we report only overall accuracy.

\textbf{CivilComments} \cite{borkan2019nuanced} is a binary toxicity detection dataset collected from online comments. Each sample is annotated with demographic identity attributes. These identity terms are spuriously correlated with the toxic label: comments mentioning certain demographic groups are disproportionately labeled toxic even when not inherently harmful. We report both overall accuracy and worst-group accuracy across identity-based groups.

\textbf{MultiNLI} \cite{williams2018multinli} is a three-class natural language inference dataset (classes: \textit{entailment}, \textit{contradiction}, and \textit{neutral}). Known shortcuts include negation words (e.g., \textit{nobody}, \textit{never}) for the contradiction class and high lexical overlap between premise and hypothesis for the entailment class \cite{gururangan2018annotation,mccoy2019right}. We use negation words as the shortcut tokens and report both overall accuracy and worst-group accuracy.

\subsubsection{Controllable benchmark Construction Details }\label{app:shortcut_construction}

We follow the shortcut benchmark of \citet{zhou2024navigating} to construct occurrence-based shortcuts for the Yelp and GoEmotions datasets. Two types of shortcuts are used:

\paragraph{Single Token (ST).}
A single label-irrelevant word, ``honestly'', is inserted at the beginning of a randomly chosen sentence in each review. The insertion probability is label-dependent, controlled by the shortcut strength parameter $\lambda$. For example, in Yelp-ST with $\lambda=1$, the shortcut occurrence/insertion probabilities are 0\%, 25\%, 50\%, 75\%, and 100\% for ratings 1 through 5, creating a spurious correlation between the presence of ``honestly'' and higher ratings. We detail the mapping between $\lambda$ and this probability distribution below (see ``Shortcut Strength to Shortcut Occurrence Probability''). 

\paragraph{Synonym (Syn).}
The construction follows the same procedure as ST, except that instead of always inserting ``honestly'', a word or phrase is uniformly sampled from a set of 15 synonyms: \{``honestly'', ``to be honest'', ``frankly speaking'', ``to tell the truth'', ``to be frank'', ``in truth'', ``candidly'', ``speaking candidly'', ``plainly speaking'', ``to be direct'', ``to come clean'', ``to put it frankly'', ``if I'm being honest'', ``in plain terms'', ``directly speaking''\}. This tests whether models can recognize and exploit the correlation between a set of semantically related terms and the label.

\paragraph{CivilComments and MultiNLI.}
Both datasets come with existing shortcut annotations. CivilComments provides demographic identity term annotations (e.g., \textit{male, female, White, Black, Muslim}). These identity terms collectively form the shortcut token set. MultiNLI provides negation words (\textit{nobody, no, never, nothing}) that are spuriously correlated with the contradiction class. 
We define the groups for WGA calculation using those shortcut token sets following the definition in Appendix~\ref{testbed}.
Training and shifted test sets are constructed using a similar $\lambda$-based scheme of \citet{zhou2024navigating}, but instead of inserting shortcut tokens, we randomly sample instances from each group of the original training/test set, where the proportion of samples from each group is determined by the shortcut occurrence probability distribution given by $\lambda$: For example, for CivilComments, $\lambda=1$ yields a 0\% samples with shortcuts for non-toxic samples, and 100\% samples with shortcuts for toxic samples.

\paragraph{Shortcut Strength to Shortcut Occurrence Probability.}
Across all datasets, the shortcut strength is controlled by a parameter $\lambda \in [0, 1]$. For a $C$-class task with labels $\{1, 2, \ldots, C\}$, the probability of shortcut occurrence for label $c$ is $\lambda(c-1)/(C-1)$. For example, on Yelp ($C=5$, $\lambda=1$), the shortcut insertion probabilities are 0\%, 25\%, 50\%, 75\%, and 100\% for ratings 1 through 5, respectively. Larger $\lambda$ corresponds to stronger spurious correlations. We train with $\lambda = 1$ unless otherwise stated. Table~\ref{tab:shortcut_prob} lists the base probabilities for each dataset.
\begin{table}[h]
\centering
\small
\begin{tabular}{@{}lcl@{}}
\toprule
Dataset & $C$ & Per-class Shortcut Occurrence Probabilities \\
\midrule
Yelp & 5 & 0\%, 25\%, 50\%, 75\%, 100\% \\
GoEmotions & 4 & 0\%, 33.3\%, 66.7\%, 100\% \\
CivilComments & 2 & 0\%, 100\% \\
MultiNLI & 3 & 100\%, 50\%, 0\% \\
\bottomrule
\end{tabular}
\caption{Base shortcut insertion probabilities per label ($\lambda = 1$). Shifted test sets reverse the order of these probabilities (e.g., 0\%, 100\% $\to$ 100\%, 0\%).}
\label{tab:shortcut_prob}
\end{table}

\paragraph{Shifted test sets.}
For all datasets, we generate shifted test sets by reversing the base probability distribution. On Yelp, the shifted test probabilities become 100\%, 75\%, 50\%, 25\%, and 0\% for ratings 1 through 5. This maximally exposes models that rely on the shortcut, as the spurious correlation from training is inverted.

\begin{table*}[t]
\centering
\small
\caption{Dataset sizes. Statistics for ``Controlled'' apply to both shortcut distribution shift and in-distribution datasets.}
\label{tab:dataset_sizes}
\resizebox{\textwidth}{!}{%
\begin{tabular}{@{}lccccccccc@{}}
\toprule
 & \multicolumn{3}{c}{\textbf{Real-World}} & \multicolumn{6}{c}{\textbf{Controlled}} \\
\midrule
 & \textbf{SST-2} & \textbf{Civil} & \textbf{MultiNLI} & \textbf{Civil} & \textbf{MultiNLI} & \textbf{Yelp-ST} & \textbf{Yelp-Syn} & \textbf{GoEmo-ST} & \textbf{GoEmo-Syn} \\
\midrule
Train & 67,349 & 115,931 & 206,175 & 111,298 & 44,508 & 8,900 & 9,400 & 1,124 & 1,124 \\
Test  & 872    & 1,124   & 1,000   & 1,000   & 600    & 1,000 & 1,000 & 685   & 685   \\
\bottomrule
\end{tabular}
}
\end{table*}

\subsection{Balance Between Shortcut Reliance and Task-relevant information}
\label{app: balance}
In Figure~\ref{fig: mstps_vs_acc}, we plot MSTPS against accuracy across all methods and benchmarks reported in Table~\ref{tab:mstps}. We can see that \textsc{Shortcut Guardrail} generally occupies the upper-left portion of the plot, striking a balance between reducing reliance on shortcuts (lower MSTPS than ERM) and preserving task-relevant information (higher accuracy than ERM). 
\begin{figure}[ht]
  \includegraphics[width=\columnwidth]{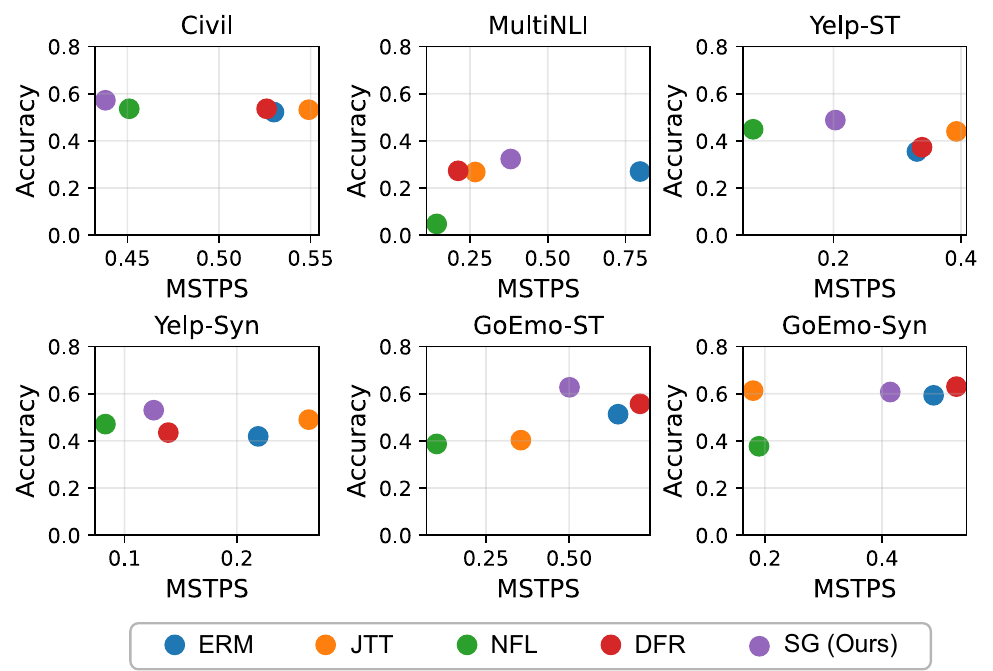}
  \caption{ Scatterplots comparing accuracy with MSTPS across all benchmarks and models reported in Table~\ref{tab:mstps}.
  }
  \label{fig: mstps_vs_acc}
\end{figure}

\subsection{In-Distribution Performance} 
\label{app:id_results}

\begin{table*}[t]
\centering
\small
\setlength{\tabcolsep}{5pt}
\begin{tabular}{@{}llcccccc@{}}
\toprule
Metric & Model & Civil & MultiNLI & Yelp-ST1 & Yelp-SYN1 & GoEmo-ST1 & GoEmo-SYN1 \\
\midrule
\multirow{2}{*}{Accuracy}
& ERM  & 0.980 & 0.908 & 0.671 & 0.679 & 0.896 & 0.928 \\
& SG   & 0.980 & 0.908 & 0.660 & 0.679 & 0.896 & 0.928 \\
\midrule
\multirow{2}{*}{WGA}
& ERM  & 0.964 & 0.840 & 0.376 & 0.000 & 0.667 & 0.774 \\
& SG   & 0.964 & 0.840 & 0.474 & 0.000 & 0.667 & 0.774 \\
\bottomrule
\end{tabular}
\caption{In-distribution (ID) performance. Our method preserves the accuracy of the original ERM model.}
\label{tab:id_results}
\end{table*}

Table~\ref{tab:id_results} reports in-distribution performance, where shortcut-label correlations from training still hold. On most benchmarks, the adaptive $\alpha$ calibration selects $\alpha = 0$, preserving the original ERM predictions. The only exception is Yelp-ST1, where $\alpha = 0.6$ is selected, resulting in a slight accuracy decrease (0.671 $\to$ 0.660) but improved WGA (0.376 $\to$ 0.474).

Yelp-SYN1 shows WGA of 0.000 for both ERM and SG. Under $\lambda=1$, the minority group contains very few examples, and the model consistently misclassifies them due to strong shortcut reliance. This is expected behavior under in-distribution evaluation and does not reflect a failure of our method.

\subsection{Impact of the Size of the Support Set}
\label{app:support_set}
As discussed in Section~\ref{sec:alpha_lora}, the optimal $\alpha$ depends on the degree of distribution shift at deployment time, and we calibrate $\alpha$ via grid search over $\{0, 0.1, \ldots, 1.0\}$ using a small labeled support set. We investigate how this calibration step depends on the size of the support set.

The landscape of $\alpha$ depends on the strength of the shortcut-induced shift. Under strong, controlled shifts where shortcut-label correlations from training are reversed (Table~\ref{tab:ood_results}), the grid search consistently selects large $\alpha$ to aggressively suppress shortcuts. Under in-distribution settings where these correlations still hold at test time (Table~\ref{tab:id_results}), shortcuts remain benign, it consistently selects small $\alpha$ (often $\alpha = 0$) to preserve the original signal. Real-world benchmarks fall in between: shortcut signals exist but are weaker and mixed with genuine predictive cues, so the optimal $\alpha$ is less clearly separated from neighboring values on the grid. Adaptive calibration is helpful when the appropriate debiasing strength is not known a priori.

Figure~\ref{fig:size_support} presents sensitivity analyses on real-world
datasets, comparing the adaptive variant against a fixed $\alpha = 1$ under
varying support set sizes. On SST-2, performance is stable across support
set sizes and coincides with the fixed $\alpha=1$ baseline, indicating that
the debiasing module can be applied at full strength here without loss. On
CivilComments, accuracy and WGA are slightly higher for $n \in \{40, 80,
160\}$ than for $n = 10$ and $20$. On MultiNLI, performance is considerably
more sensitive to the support set size. Overall, performance is stable for
$n \ge 40$ across all three datasets.

Overall, in our experiment we use support size $n=40$. The value of the calibration step is not in finding a globally optimal $\alpha$, but in adapting the LoRA strength to the unknown degree of distribution shift at deployment time, a flexibility that a single fixed $\alpha$ cannot provide.

\begin{figure*}
    \centering
    \includegraphics{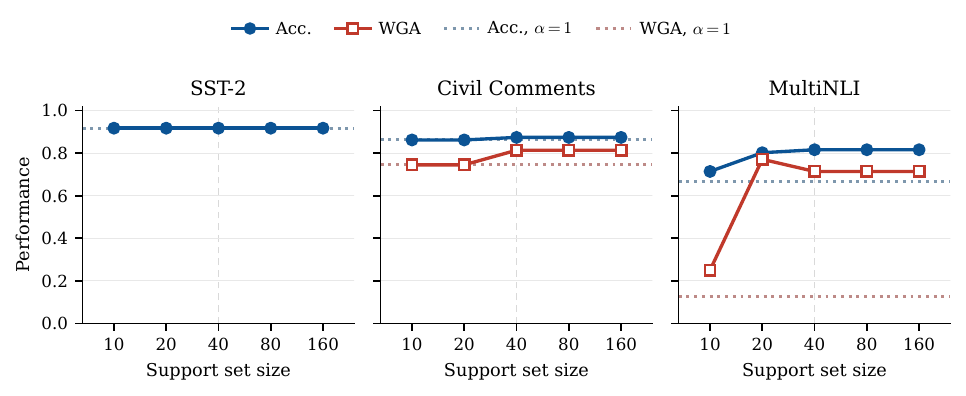} 
    \caption{Sensitivity to support set size. Test accuracy and
worst-group accuracy (WGA) of \textsc{Shortcut Guardrail} as a function of
the number of support examples used to calibrate the adaptive scalar
$\alpha$, on SST-2, Civil Comments, and MultiNLI.
Dotted lines mark the $\alpha\!=\!1$ baseline, which applies the debiasing module with no calibration. The datasets are the same as in Table~\ref{tab:realworld_results}.}
    \label{fig:size_support}
\end{figure*}


\subsection{\textsc{Shortcut Guardrail} Implementation Details}
\label{app:implementation}
We implement all models in PyTorch using the HuggingFace Transformers and PEFT libraries for the BERT encoder and LoRA adapters, and scikit-learn for the logistic-regression head in the DFR baseline; exact versions are provided in the released code. All results in Section~\ref{sec:finding} are reported as means with standard deviation over three random trials. Results in Section~\ref{sec:main} are from a single run per configuration.

\paragraph{Training Hyperparameters.}
Table~\ref{tab:training_hyper} lists the hyperparameters for training the LoRA-based debiasing module. We use a single set of values across all datasets, with the exception of LoRA rank which is scaled by dataset size (see below).

\begin{table*}[h]
\centering
\small
\begin{tabular}{lll}
\toprule
\textbf{Hyperparameter} & \textbf{Value} & \textbf{Rationale} \\
\midrule
\texttt{learning\_rate} & $1\text{e-}4$ & Standard for LoRA fine-tuning \\
\texttt{gradient\_accumulation\_steps} & 1 & Maximizes optimization steps; noisier gradients \\
 & & act as implicit regularization \\
\texttt{num\_train\_epochs} & 1 & Sufficient for ${\sim}$700--1000 samples \\
\texttt{temperature} & 0.1 & Standard InfoNCE temperature \\
\texttt{lora\_r} & 4 or 8 & Scaled by dataset size (see below) \\
\bottomrule
\end{tabular}
\caption{Training hyperparameters for the debiasing module.}
\label{tab:training_hyper}
\end{table*}

\paragraph{LoRA Rank Selection.}
We set LoRA rank proportional to dataset size: $r=4$ for datasets with approximately 700 samples (GoEmotions, CivilComments, MultiNLI, SST-2) and $r=8$ for datasets with approximately 1000 samples (Yelp). The adapter must have enough capacity to capture the contrastive signal but not so much that it overfits to noisy predictions. With ${\sim}$700 samples producing ${\sim}$6000-8000 contrastive pairs, $r=4$ provides a good bias-variance trade-off. Larger datasets generate more diverse pairs, so the additional capacity of $r=8$ can be utilized without overfitting. Empirically, Yelp at $r=4$ underperforms (WGA 0.146) compared to $r=8$ (WGA 0.264), while other datasets show stable or improved results at $r=4$.

\paragraph{Compute and model size.} Our backbone is BERT-base-uncased (approximately 110M parameters); the LoRA debiasing module adds a small number of trainable parameters (on the order of 147K for $r=4$, 295k for $r=8$). All experiments were run on a single NVIDIA A6000 GPU. Training the debiasing module for one epoch takes 5m23s minutes 5 minutes 23 seconds, as measured on the SST-2 dataset.

\subsection{Baseline Methods} \label{app:baselines}

\paragraph{Just Train Twice (JTT).}
We adopt the JTT method \cite{liu2021just_train_twice} using the authors' implementation on \href{https://github.com/anniesch/jtt}{GitHub}. JTT follows a two-stage training procedure without requiring group annotations. In Stage 1, a \texttt{BERT-base-uncased} model is fine-tuned on the shortcut-biased training set using standard empirical risk minimization (ERM). After training for $T_1$ epochs, the model's predictions are recorded, and misclassified examples are collected into an \textit{error set}. In Stage 2, a fresh model is trained on the full training set where error-set examples are upweighted by a factor $\lambda$. Following the original paper, we perform a grid search over learning rate $\in \{10^{-3}, 10^{-4}, 10^{-5}\}$, $T_1 \in \{1, 2\}$, and $\lambda \in \{4, 5, 6\}$, with Stage 2 fixed at $T_2 = 10$, batch size 16, and weight decay 0.01. The best configuration is selected by worst-group accuracy (WGA) on the validation set.

\paragraph{doNt Forget your Language (NFL).}
We use the NFL regularization method \cite{chew2024nfl} via the authors' implementation on \href{https://github.com/oscarchew/doNt-Forget-your-Language}{GitHub}. NFL fine-tunes the base model while penalizing deviations from the initial pretrained weights to preserve linguistic representations. We specifically use the \texttt{NFL-CP} variant, which applies an $\ell_2$ penalty on the squared difference between the fine-tuned and initial encoder weights across all \texttt{Linear} and \texttt{Embedding} layers. This penalty is scaled by $\lambda_r$ and added to the cross-entropy loss. We use the default hyperparameters provided by the authors and select the checkpoint with the highest validation accuracy.

\paragraph{Deep Feature Reweighting (DFR).}
We implement DFR \cite{kirichenko2023last_layer} as provided in the NFL codebase. DFR freezes a pretrained \texttt{BERT} model and the \texttt{[CLS]}-token representations are extracted, standardized, and used to fit an $\ell_1$-regularized logistic regression head on a randomly sampled small probe set of the training data (5\%). The regularization strength is selected via cross-validation over $\{1.0, 0.7, 0.3, 0.1, 0.07, 0.03, 0.01\}$ using a 50/50 split of the probe set.

\subsection{Positioning Against Existing Approaches}
\label{app:positioning}
Table~\ref{tab:positioning} situates \textsc{Shortcut Guardrail} (\textsc{SG}) within the landscape of shortcut-mitigation and test-time adaptation methods, comparing along three binary axes that characterize the assumptions a method makes, together with a summary of the mechanism it employs.
Comparison axes are explained below:
\begin{itemize}
    \item \textit{Deployment-time} indicates whether a method operates on a converged model using only test data, rather than intervening during training; consequently, it also entails no access to the original training set.
    \item \textit{No shortcut annotations} indicates whether a method avoids requiring explicit shortcut or group labels.
    \item \textit{Per-input attribution} indicates whether shortcut tokens are localized per example through gradient-based saliency, as opposed to being inferred from dataset-level statistics.
\end{itemize}
\textsc{SG} is the only method that satisfies all three axes: it operates at deployment time while using per-input attribution to localize shortcuts. Tent also operates at deployment time, but it targets general covariate shift via entropy minimization rather than spurious correlations and does not localize shortcuts, and is therefore not directly comparable on token-level shortcut benchmarks.
We focus on token-level shortcuts, the form studied by most prior work on spurious correlations in text classification, including LTGR~\citep{du2021towards}, C2L~\citep{choi2022c2l}, and NFL~\citep{chew2024nfl}.
\begin{table*}[ht]
\centering
\small
\setlength{\tabcolsep}{6pt}
\renewcommand{\arraystretch}{1.2}
\begin{tabular}{@{}lcccl@{}}
\toprule
Method & Deployment-time & \makecell{No shortcut\\annotation} & \makecell{Per-input\\attribution} & Approach \\
\midrule
Group DRO \citep{sagawa2020distributionally} & \xmark & \xmark & \xmark & Worst-group reweighting \\
JTT \citep{liu2021just_train_twice}          & \xmark & \cmark & \xmark & Error-based retraining \\
LTGR \citep{du2021towards}                   & \xmark & \cmark & \cmark & Attribution-guided training \\
DFR \citep{kirichenko2023last_layer}         & \xmark & \cmark & \xmark & Last-layer retraining \\
NFL \citep{chew2024nfl}                      & \xmark & \cmark & \xmark & Weight regularization \\
C2L \citep{choi2022c2l}                      & \xmark & \cmark & \cmark & Contrastive learning \\
Tent \citep{wang2021tent}                    & \cmark & \cmark & \xmark & Entropy minimization \\
\midrule
\textbf{\textsc{Shortcut Guardrail} (ours)}                  & \cmark & \cmark & \cmark & \makecell{Gradient-based attribution \\ + Masked contrastive learning} \\
\bottomrule
\end{tabular}
\caption{Positioning of \textsc{Shortcut Guardrail} against
existing shortcut-mitigation and test-time adaptation approaches.
\emph{Deployment-time} indicates operation on a converged model using only
test data, which entails no access to the training set.
\emph{Per-input attribution} indicates that shortcut tokens are localized
per example via gradient-based saliency rather than inferred from
dataset-level statistics.
\textsc{Shortcut Guardrail} is the only method that operates at deployment time while using
per-input attribution. Tent also operates at deployment time, but targets
general covariate shift via entropy minimization rather than spurious
correlations and does not localize shortcuts; it is therefore not directly
comparable on token-level shortcut benchmarks.}
\label{tab:positioning}
\end{table*}

\section{Proof of the Theory in Section~\ref{sec:theory}}\label{app:theory}

We prove that deployment-time shortcut mitigation faces a lower performance upper bound than training-time mitigation.

\paragraph{Preliminaries.}
Let $\mathcal{X,Y,V}$ denote the input space (e.g., text sequences), the label space (e.g., sentiment classification), and the vocabulary from which the tokens are drawn, respectively.
Let $\mathcal{D}_{\mathrm{train}} \in (\mathcal{X} \times \mathcal{Y})^N$ denote a training set of sample size $N$, and $\mathcal{D}_{\mathrm{test}} \in \mathcal{X}^M$ denote a test set of sample size $M$.
We define $S^* \subseteq \mathcal{V} \times \mathcal{Y}$ as the true spurious correlation structure yielded by the set of shortcut tokens (see the group definitions in Appendix~\ref{testbed}). We model $S^*$ as a random variable taking values in a finite set $\mathcal{S}$ with $|\mathcal{S}| \geq 2$.
We decompose shortcut mitigation into two stages: \textit{identification} and \textit{intervention}. 
In the \textit{identification} stage, we estimate the correlation structure $S^*$ by either training-time ($\hat{S}_{\mathrm{train}}$) or deployment-time methods ($\hat{S}_{\mathrm{deploy}}$).
In the \textit{intervention} stage, we adapt the model parameters $\theta$ given $\hat{S}_{\mathrm{train}}$ or $\hat{S}_{\mathrm{deploy}}$. 
We define $I(S^*;\hat{S})$ as the mutual information between the true and estimated correlation structure, and $H(S^*)$ as the entropy of $S^*$, reflecting the structure's complexity. 
We further define $R(\hat{S}, S^*)$ as a scalar-valued mitigation performance (e.g., overall accuracy or worst-group accuracy) of the method that produces the estimate $\hat{S}$, and $g(\theta, \mathcal{D}_{\mathrm{test}})$ as all possible deployment-time observables.

\begin{assumption}[Markov chain]\label{assump:markov}
    We assume that the following random variables form a Markov chain:
    \[
    S^* \;\to\; \mathcal{D}_{\mathrm{train}} \;\to\; \theta \;\to\; g(\theta, \mathcal{D}_{\mathrm{test}}).
    \]
    That is, conditioned on $\mathcal{D}_{\mathrm{train}}$, the model parameters $\theta$ are independent of $S^*$; and conditioned on $\theta$, the deployment-time observables $g(\theta, \mathcal{D}_{\mathrm{test}})$ are independent of $(\mathcal{D}_{\mathrm{train}}, S^*)$.
\end{assumption}

\thmInfoBound*
\begin{proof}[Proof of Theorem~\ref{thm:dpi}]
By Assumption~\ref{assump:markov}, $S^* \to \mathcal{D}_{\mathrm{train}} \to \theta \to g(\theta, \mathcal{X}_{\mathrm{test}})$ is a Markov chain. The Data Processing Inequality (DPI) states that for any Markov chain $A \to B \to C$, $I(A; C) \leq I(A; B)$. 
Applying the DPI to the first three and last three elements of the chain yields
\[
I(S^*;\, g(\theta, \mathcal{X}_{\mathrm{test}})) \;\leq\;I(S^*;\, \theta) \;\leq\; I(S^*;\, \mathcal{D}_{\mathrm{train}}).
\]
 
Since $\hat{S}_{\mathrm{deploy}}$ is a deterministic function of $g(\theta, \mathcal{X}_{\mathrm{test}})$, a further application of the DPI yields $I(S^*;\, \hat{S}_{\mathrm{deploy}}) \leq I(S^*;\, g(\theta, \mathcal{X}_{\mathrm{test}}))$. Combining these inequalities gives the result. Non-negativity of $\Delta I$ follows directly from the first inequality.
\end{proof}

\fanoErrBound*

\begin{proof}[Proof of Theorem~\ref{thm:fano}]
Fano's Inequality states that for any discrete random variable $X$ taking values in a set of size $K$ and any estimator $\hat{X}$ based on an observation $Z$:
\[
P(\hat{X} \neq X) \;\geq\; \frac{H(X) - I(X; Z) - 1}{\log(K - 1)}.
\]
 
\emph{Deployment-time bound~\eqref{eq:fano_deploy}:}\;
Since $\hat{S}_{\mathrm{deploy}}$ is a function of $g(\theta, \mathcal{X}_{\mathrm{test}})$, which is itself a function of $\theta$ (given the independent test inputs), we have $I(S^*;\, \hat{S}_{\mathrm{deploy}}) \leq I(S^*;\, \theta)$ by Theorem~\ref{thm:dpi}. Applying Fano's Inequality with $X = S^*$, $\hat{X} = \hat{S}_{\mathrm{deploy}}$, and $Z = \theta$ yields the result (noting that using $\theta$ in place of $g(\theta, \mathcal{X}_{\mathrm{test}})$ can only decrease the Fano bound, so the inequality remains valid).
 
\emph{Training-time bound~\eqref{eq:fano_train}:}\;
The training-time estimator $\hat{S}_{\mathrm{train}}$ has access to $\mathcal{D}_{\mathrm{train}}$. Applying Fano's Inequality with $X = S^*$, $\hat{X} = \hat{S}_{\mathrm{train}}$, and $Z = \mathcal{D}_{\mathrm{train}}$ yields the result.

\emph{Ordering:}\;
From Theorem~\ref{thm:dpi}, $I(S^*;\, \theta) \leq I(S^*;\, \mathcal{D}_{\mathrm{train}})$, so $L_{\mathrm{deploy}} \geq L_{\mathrm{train}}$ holds.
\end{proof}

\assumpmono*
\lemmaCeilingError*
\begin{proof}[Proof of Lemma~\ref{lem:perf_bound}]
Since $P(\hat{S} \neq S^*) \geq L$, we have $1 - P(\hat{S} \neq S^*) \leq 1 - L$. Because $\rho$ is non-decreasing (Assumption~\ref{assump:mono}):
\[
\mathbb{E}[R(\hat{S}, S^*)] = \rho\bigl(1 - P(\hat{S} \neq S^*)\bigr) \leq \rho(1 - L).
\]
Equations~\eqref{eq:perf_deploy} and~\eqref{eq:perf_train} follow by substituting $L = L_{\mathrm{deploy}}$ and $L = L_{\mathrm{train}}$ from Theorem~\ref{thm:fano}.
\end{proof}

\thmPerfBound*
\begin{proof}[Proof of Theorem~\ref{thm:main}]
\emph{Inequality:}\;
By Theorem~\ref{thm:fano}, $L_{\mathrm{deploy}} \geq L_{\mathrm{train}}$, which implies $1 - L_{\mathrm{deploy}} \leq 1 - L_{\mathrm{train}}$. Since $\rho$ is non-decreasing (Assumption~\ref{assump:mono}):
\[
\rho(1 - L_{\mathrm{deploy}}) \;\leq\; \rho(1 - L_{\mathrm{train}}).
\]
 
\emph{Gap characterisation:}\;
Since $\rho$ is differentiable by Assumption~\ref{assump:mono}, the Mean Value Theorem guarantees the existence of some $\xi \in [1 - L_{\mathrm{deploy}},\; 1 - L_{\mathrm{train}}]$ such that:
\[
\rho(1 - L_{\mathrm{train}}) - \rho(1 - L_{\mathrm{deploy}}) \;=\; \rho'(\xi) \cdot (L_{\mathrm{deploy}} - L_{\mathrm{train}}),
\]
where $\rho'(\xi)$ is the first-order derivative of $\rho$ at $\xi$.
Substituting the definitions~\eqref{eq:L_deploy} and~\eqref{eq:L_train}:
\begin{align*}
L_{\mathrm{deploy}} - L_{\mathrm{train}}
&= \frac{I(S^*;\, \mathcal{D}_{\mathrm{train}}) - I(S^*;\, \theta)}{\log(|\mathcal{S}| - 1)} \\[4pt]
&= \frac{\Delta I}{\log(|\mathcal{S}| - 1)}.
\end{align*}
Combining yields:
\[
\rho(1 - L_{\mathrm{train}}) - \rho(1 - L_{\mathrm{deploy}}) \;=\; \rho'(\xi) \cdot \frac{\Delta I}{\log(|\mathcal{S}| - 1)}.
\]
Since $\rho' \geq 0$ and $\Delta I \geq 0$ (Theorem~\ref{thm:dpi}), the gap is non-negative, confirming that the deployment-time performance ceiling is at most the training-time ceiling.
\end{proof}
\section{Extended Related Work}
\label{app:related_work}
 
\paragraph{Shortcut Learning.}
Shortcut learning refers to the phenomenon where deep neural networks rely on superficial statistical patterns rather than task-relevant features for prediction ~\citep{geirhos2020shortcut}. This issue is particularly prevalent in natural language understanding, where pretrained language models such as BERT ~\citep{devlin2019bert} have been shown to exploit spurious correlations present in training data ~\citep{du2022shortcut_survey, tu2020empirical}. A growing body of work has examined how these shortcuts manifest at the token level: specific words or phrases become disproportionately associated with target labels, causing the model to over-rely on their presence regardless of context ~\citep{wang2022identifying, du2023less_learn_shortcut}. For instance, in natural language inference, annotation artifacts such as negation words and lexical overlap between premise and hypothesis serve as well-documented shortcuts ~\citep{gururangan2018annotation, mccoy2019right}. Recent work has further expanded the scope of analysis from individual tokens to concept-level correlations ~\citep{zhou2024concept} and introduced controlled benchmarks with tunable shortcut strength for systematic evaluation ~\citep{zhou2024navigating}. While these studies provide valuable diagnostic insights, they primarily focus on characterizing the problem rather than addressing it under realistic deployment conditions.
 
\paragraph{Training-Time Shortcut Mitigation.}
A range of methods have been proposed to mitigate shortcut learning during model training, with progressively lighter supervision requirements. Group Distributionally Robust Optimization ~\citep[Group DRO;][]{sagawa2020distributionally} directly optimizes worst-group performance but requires explicit annotations of shortcut occurrence for every training example. To relax this requirement, Just Train Twice ~\citep[JTT;][]{liu2021just_train_twice} identifies shortcut-vulnerable samples via a proxy model and upweights them during retraining, eliminating the need for group labels while still requiring full access to the training data. Ensemble-based approaches ~\citep{clark2019dont_take_easy} train a bias-only model alongside the main model to downweight shortcut-reliant predictions, though they assume prior knowledge of the bias type. Deep Feature Reweighting ~\citep[DFR;][]{kirichenko2023last_layer} demonstrates that retraining only the last classification layer on a small group-balanced subset can recover robust performance, further reducing the retraining burden. More recently, NFL ~\citep{chew2024nfl} applies counterfactual data augmentation and regularization to weaken spurious correlations without group labels. Despite this progression toward lighter assumptions, all of these methods still require some combination of training data access, label supervision, or prior knowledge of the shortcut type. Moreover, shortcut behavior is inherently distribution-dependent: correlations that are benign or even helpful during training may become harmful only when the deployment-time distribution shifts. None of these methods can adapt to such shifts after training, when the model is fixed and ground-truth labels are unavailable. This is precisely the setting we address in this work.
 
\paragraph{Adapting Models Beyond Training.}
A separate line of work in computer vision has explored test-time adaptation (TTA), where model parameters are updated using unlabeled test data to handle distribution shifts. Tent ~\citep{wang2021tent} adapts batch normalization layers by minimizing prediction entropy at test time, and subsequent work has improved its efficiency and stability ~\citep{niu2022eata}. Other approaches adjust batch normalization statistics to the target distribution ~\citep{schneider2020improving, nado2020evaluating} or introduce self-supervised auxiliary objectives for test-time training ~\citep{sun2020test}. However, \citet{zhao2023pitfalls} identify several failure modes of existing TTA methods, including sensitivity to batch size and the severity of distribution shift. More fundamentally, these methods are designed for vision architectures and rely on properties that do not transfer to text: batch normalization layers (absent in transformer encoders like BERT), continuous pixel inputs (as opposed to discrete tokens), and spatial augmentations (which have no direct analogue in language). Furthermore, existing TTA methods target general distribution shift rather than the structured problem of token-level shortcut reliance. Our work addresses this gap by proposing a deployment-time adaptation framework specifically designed for mitigating token-level shortcuts in text classifiers, without requiring batch normalization or continuous input assumptions.
 
\paragraph{Gradient-Based Attribution and Contrastive Learning.}
Our method builds on two technical foundations. The first is gradient-based attribution, which estimates the contribution of each input feature to the model's prediction. Integrated Gradients ~\citep{sundararajan2017axiomatic} provides an axiomatic framework for this purpose, while \citet{li2016understanding} demonstrate that the simpler gradient $\times$ input method is effective for identifying influential tokens in NLP models. These techniques have been widely used for model interpretation, and more recently, \citet{du2022defi} connect attribution to debiasing by using interpretation scores to guide a teacher-student training procedure. However, prior work treats attribution primarily as an explanatory tool or a training-time signal; no existing method leverages it as an operational mechanism for deployment-time intervention. The second foundation is contrastive learning. SimCLR ~\citep{chen2020simclr} establishes the general framework of learning representations by contrasting augmented views of the same input, and \citet{khosla2020supervised} extend it to the supervised setting with improved robustness. In NLP, SimCSE ~\citep{gao2021simcse} adapts contrastive objectives to sentence encoders using dropout as minimal augmentation, and FairFil ~\citep{cheng2021fairfil} applies contrastive learning to debias pretrained text representations with respect to social attributes. More closely related to our setting, C2L ~\citep{choi2022c2l} combines contrastive learning with counterfactual augmentation to reduce reliance on spurious patterns, but still operates at training time with label supervision. Across these works, positive pairs are constructed through generic augmentations (dropout, random masking, synonym substitution) that are agnostic to which features the model actually relies on. Our method bridges these two lines by using gradient-based attribution to identify candidate shortcut tokens and constructing positive pairs through targeted masking of these tokens, so that the contrastive objective directly discourages shortcut reliance at deployment time.
\section{Ethics Considerations}
Our work addresses the robustness of text classifiers to spurious correlations, which is directly relevant to fairness in NLP systems. For example, in toxicity detection, spurious associations between demographic identity terms and the toxic label can lead to biased predictions against marginalized groups. Our method provides a practical tool for mitigating such biases at deployment time without requiring sensitive group annotations. All datasets used in this work are publicly available and have been widely used in prior research. We rely on the consent and ethical review processes of the original dataset creators. We do not collect or annotate any new data involving human subjects. 

All datasets used are established public benchmarks. CivilComments and GoEmotions are sourced from public online comments and were released by their creators with steps to remove or anonymize user identifiers; we use only these released versions and do not attempt re-identification. CivilComments and GoEmotions contain offensive or toxic language by design, as this is the prediction target; we did not augment or annotate this content, and we display only short, non-gratuitous excerpts where necessary for illustration.

\paragraph{Licenses.} The artifacts used in this work are publicly available and widely used in prior research. CivilComments~\citep{borkan2019nuanced} is released under the CC0 license, 
GoEmotions~\citep{demszky2020goemotions} and SST-2~\citep{wang2019glue} are under the Apache 2.0 license, 
and MultiNLI~\citep{williams2018multinli} is primarily under the Open American National Corpus (OANC) license, which allows all content to be freely used, modified, and shared under permissive terms; the Yelp review dataset~\citep{zhang2015character} and the Amazon Product Reviews~\citep{ni2019justifying} are distributed by their creators for research use. 
For models and training, we use the BERT encoder~\citep{devlin2019bert}, under the Apache 2.0 license, and LoRA~\citep{hu2022lora}, under the MIT license. Our use of all artifacts is consistent with their intended use for academic research.
We release our own code under the Apache 2.0 license.
\section{Potential Risks}
While our method reduces shortcut reliance, we acknowledge three potential risks. First, gradient-based attribution may mistakenly mask task-relevant tokens, suppressing genuine predictive signals. Second, adversarial actors aware of our token-level assumption could craft inputs exploiting multi-token shortcuts that our method fails to detect. Third, imperfect debiasing may create a false sense of security in high-stakes applications such as toxicity detection, where residual bias against marginalized groups may remain. We encourage careful evaluation before deployment.
\end{document}